\newcommand{\argmax}{\mathop{\rm arg~max}\limits}
\theoremstyle{plain}
\newtheorem{theorem}{Theorem}[section]
\newtheorem{lemma}[theorem]{Lemma}
\theoremstyle{definition}
\theoremstyle{remark}
\icmltitlerunning{Meta-learning Representations for Learning from Multiple Annotators}
\begin{document}

\twocolumn[
\icmltitle{Meta-learning Representations for Learning from Multiple Annotators}



\icmlsetsymbol{equal}{*}

\begin{icmlauthorlist}
\icmlauthor{Atsutoshi Kumagai}{ntt}
\icmlauthor{Tomoharu Iwata}{ntt}
\icmlauthor{Taishi Nishiyama}{ntt}
\icmlauthor{Yasutoshi Ida}{ntt}
\icmlauthor{Yasuhiro Fujiwara}{ntt}
\end{icmlauthorlist}

\icmlaffiliation{ntt}{NTT Corporation, Japan}

\icmlcorrespondingauthor{Atsutoshi Kumagai}{atsutoshi.kumagai@ntt.com}

\icmlkeywords{Machine Learning, ICML}

\vskip 0.3in
]



\printAffiliationsAndNotice{}  

\begin{abstract}
We propose a meta-learning method for learning from multiple noisy annotators.
In many applications such as crowdsourcing services,
labels for supervised learning are given by multiple annotators.
Since the annotators have different skills or biases, given labels can be noisy.
To learn accurate classifiers, existing methods require many noisy annotated data.
However, sufficient data might be unavailable in practice.
To overcome the lack of data, the proposed method uses labeled data obtained in different but related tasks.
The proposed method embeds each example in tasks to a latent space by using a neural network and constructs a probabilistic model for learning a task-specific classifier while estimating annotators' abilities on the latent space.
This neural network is meta-learned to improve the expected test classification performance
when the classifier is adapted to a given small amount of annotated data.
This classifier adaptation is performed by maximizing the posterior probability via the expectation-maximization (EM) algorithm.
Since each step in the EM algorithm is easily computed as a closed-form and is differentiable, the proposed method can efficiently backpropagate the loss through
the EM algorithm to meta-learn the neural network.
We show the effectiveness of our method with real-world datasets with synthetic noise
and real-world crowdsourcing datasets.
\end{abstract}

\section{Introduction}
\label{intro}
Supervised learning requires labeled data for learning classifiers.
In real-word applications, the labels are often given from multiple annotators.
For example, in crowdsourcing services, we can obtain the labels by outsourcing annotation tasks to multiple workers
\citep{zhang2016learning,sheng2019machine}.
In medical care or cyber security, multiple experts often perform annotation 
because annotation is quite time-consuming and difficult
\citep{mimori2021diagnostic,salem2021maat}.

In such situations, an essential challenge is that the obtained labels are generally {\it noisy}. This is because the performance of different annotators can vary widely.
For example, in a crowdsourcing service, some workers are simply spammers that provide random labels to easily earn money
\citep{kazai2011worker}.
In medical care or cyber security, even experts have much variability in annotation performance
\citep{raykar2010learning,mimori2021diagnostic}.
When such noisy labels are used, standard supervised learning methods cannot perform well
\citep{song2022learning}.
Thus, many methods have been proposed for learning classifiers from noisy labeled data obtained from multiple annotators 
without knowing ground truth labels
\citep{raykar2010learning,kajino2012convex,rodrigues2018deep,chu2021learning,li2024transferring}. 
These methods usually require a large amount of annotated data to deal with noisy labels.
However, such data might be difficult to collect in some applications.
For example, since the crowdsourcing services usually charge a fee on the basis of the number of data and annotators, 
enough annotated data are difficult to collect when budgets are constrained.
In medical care or cyber security, due to the scarcity of data and expertise requirements, it is difficult to collect both sufficient data to be labeled and experts who can annotate.

Even if a large amount of data is difficult to collect on a task of interest, called a target task~\footnote{Although ``task'' is often used in the sense of ``example'' to be annotated in crowdsourcing literature, our paper uses the ``task'' for a set of examples.}, 
sufficient data with ground truth labels might be available in different but related tasks, called source tasks.
For example, if we want to build medical imaging diagnosis systems in a target task, in which only a limited number of noisy annotated training images are available, we can use clean labeled data in other image classification problems (e.g., ImageNet~\citep{deng2009imagenet}.)~\footnote{For convenience, we assume that all labels in the source tasks are correct throughout this paper. This assumption is common in existing meta-learning and transfer learning studies. We discuss how to apply our method when there are noisy labeled source data in Section \ref{remark}.}
 
In this paper, we propose a meta-learning method for learning classifiers from a limited number of noisy labeled data from multiple annotators on target tasks by using clean labeled data in source tasks.
Figure \ref{prob_setup} illustrates the overview of our problem setting.
Meta-learning aims to learn how to learn from a few data and
has recently been successfully used for various small data problems such as few-shot classification~\citep{vettoruzzo2024advances}.
Meta-learning is usually formulated as a bi-level optimization problem.
In the inner problem, task-specific parameters of the model are adapted to a given small amount of task-specific examples in a source task.
In the outer problem, common parameters shared across all tasks are meta-learned to improve the expected test performance when the adapted model is used.

In the inner problem of the proposed method, we first embed the given task-specific examples to a latent space by a neural network and 
build a probabilistic latent variable model for learning classifiers from multiple annotators on the latent space.
By using the high expressive capability of neural networks, we aim to meta-learn embeddings suitable for our problem.
Since source tasks contain only clean labeled data, 
we artificially introduce label noises to the task-specific examples at every meta-training iteration to simulate the test environment.
We will demonstrate that this pseudo-annotation strategy is critical to improve performance in our experiments.
With our probabilistic model, a ground truth label for each given example is treated as a latent variable, and
the embedded task-specific examples are modeled by the Gaussian mixture model (GMM) on the latent space.
Each component of GMM corresponds to a ground truth class label (latent variable).
Further, each annotator's ability is modeled as an annotator-specific confusion matrix, whose entries are the probability of the observed noisy label given a ground truth label.
This modeling allows the GMM to be fitted to the embedded data while accounting for each annotator quality.
The GMM parameters and annotator-specific confusion matrices are task-specific parameters and are obtained by maximizing the posterior probability given the pseudo-annotated task-specific data and the priors with the expectation-maximization (EM) algorithm.
By using the adapted GMM parameters, the task-specific classifier is obtained as the posterior probability of the ground truth label given an example in a principled manner.

\begin{figure}[t]
\centering
\includegraphics[width=7.5cm]{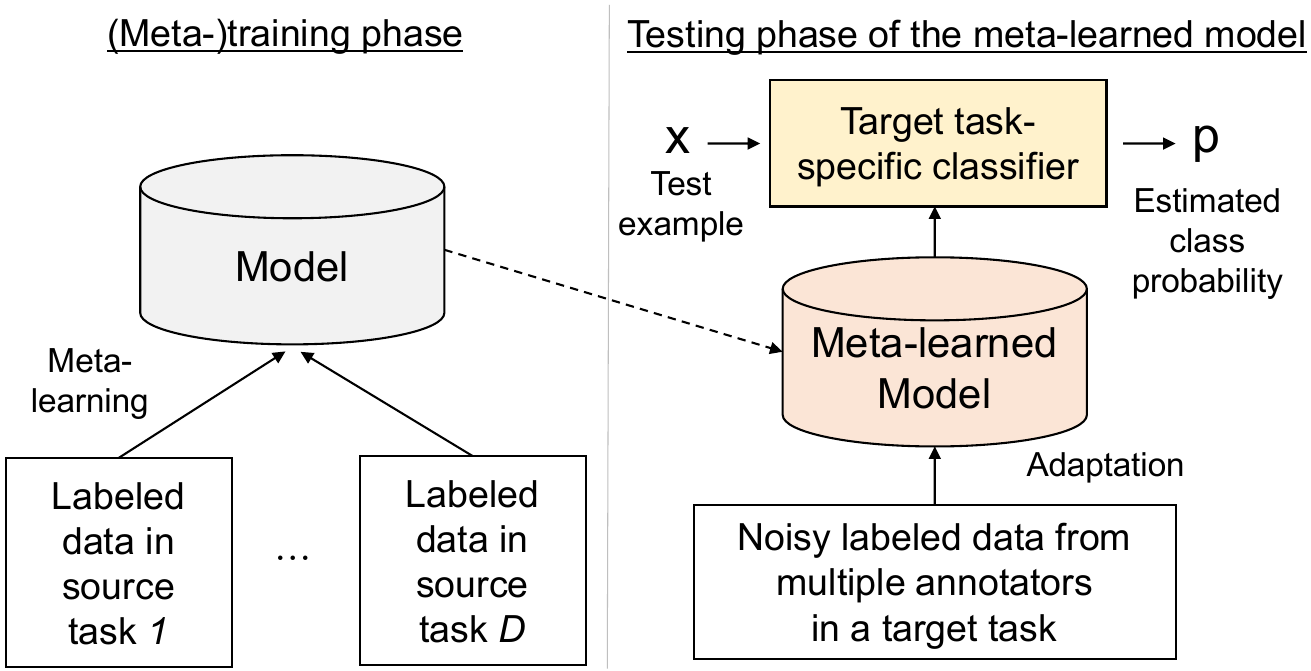}
\caption{Our problem setting. In the (meta-)training phase, our model is meta-learned with multiple source tasks.
In the testing phase of the meta-learned model, a classifier is obtained using the meta-learned model adapted to noisy labeled data in a target task.}
\label{prob_setup}
\end{figure}

\begin{figure*}[t]
\centering
\includegraphics[width=14.0cm]{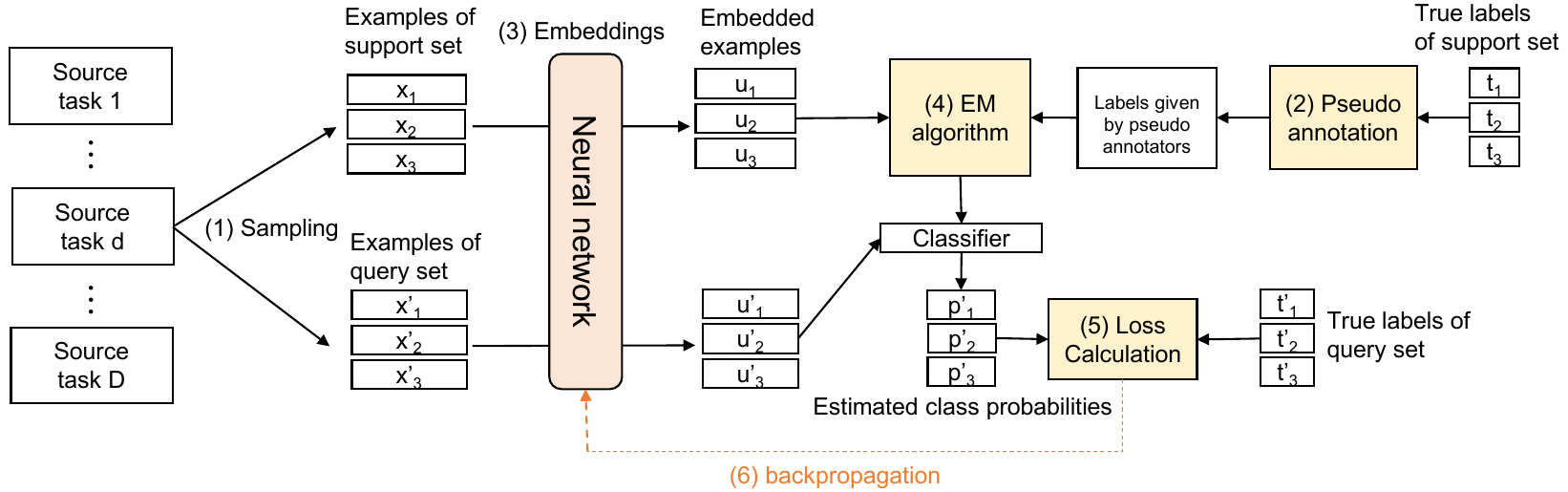}
\caption{Overview of our meta-learning procedure. (1) For each training iteration, we randomly sample a few labeled data, called support set, and labeled data, called query set, from a randomly selected source task.
(2) We generate pseudo-labels on the basis of a support set to simulate data obtained from multiple noisy annotators.
(3) All examples are embedded to a latent space by a neural network, and
(4) the task-specific classifier is obtained by the EM algorithm on the latent space with the support set (Section \ref{model}).
(5) The classification loss is calculated with the classifier on the query set and (6) is backpropagated to update the neural network.}
\label{overview}
\end{figure*}

In the outer problem, we meta-learn the parameters of the neural network such that the expected test classification performance, 
which is directly calculated with clean labeled data, is improved when the adapted classifier is used.
Since each step in the EM algorithm is easily obtained as a closed-form and is differentiable, we can efficiently solve the bi-level optimization with a standard gradient descent method.
By meta-learning the neural network on various tasks, we can obtain example representations suitable for learning from multiple noisy annotators on unseen tasks.

\section{Related Work}
\label{rele}
Many methods have been proposed for learning from multiple noisy annotators
\citep{sheng2019machine,zhang2022survey}.
Early methods focused on estimating ground truth labels for given training data
\citep{dawid1979maximum,whitehill2009whose,venanzi2014community,welinder2010multidimensional}.
Majority voting (MV) is the simplest method. However, it often does not work well because it ignores per-annotator characteristics
\citep{raykar2010learning}.
The Dawid and Skene (DS) model
\citep{dawid1979maximum}
is a famous probabilistic model that explicitly models the annotator's ability as a class-dependent and input example-independent confusion matrix. 
It estimates the ground truth labels via the EM algorithm.
Due to its effectiveness, the DS model has been the basis for subsequent studies
\citep{whitehill2009whose,venanzi2014community,welinder2010multidimensional,li2014error}.
Later studies focused on directly learning classifiers from noisy labeled data given by multiple annotators
\citep{raykar2010learning,kajino2012convex,rodrigues2013learning}.
Among them, neural network-based methods have gained attention for their high expressive capability
\citep{albarqouni2016aggnet,rodrigues2018deep,chu2021learning,chu2021improve,wei2022deep,gao2022learning,guo2023label,li2024transferring,zhang2024coupled}.
Although these methods are promising, when there are insufficient data, they do not work well since overfitting easily occurs.
Although our method also uses neural networks, they are shared across all tasks and are learned with all data in source tasks.
Therefore, our method can learn from a few data without overfitting.

Meta-learning aims to learn how to learn from a few data by using data in related tasks
\citep{hospedales2020meta,vettoruzzo2024advances}.
Although most works focus on few-shot classification with clean labeled data, 
a few methods use the meta-leaning for learning from multiple noisy annotators
\citep{zhang2023crowdmeta,xu2022crowdsourcing,han2021crowdsourcing,han2021crowdsourcing2}.
Zhang et al. \citeyearpar{zhang2023crowdmeta} and Han et al. \citeyearpar{han2021crowdsourcing2}
use a model pre-trained by ordinary meta-learning or transfer learning for estimating ground truth labels of only given examples in target tasks.
Unlike the proposed method, these methods cannot directly learn classifiers on target tasks.
One method requires additional annotators for testing examples in target tasks
\citep{han2021crowdsourcing}. 
Most importantly,
all these methods use pre-trained models without noisy annotations as feature extractors and simply apply learning methods from noisy annotators to only target tasks.
In contrast, the proposed method incorporates the classifier learning from noisy annotators during the meta-learning phase to mimic the test environment.  By constructing appropriate probabilistic models in the inner optimization, the proposed method can adapt to data given from noisy annotators 
via the closed-form EM steps, which leads to effective and fast meta-learning.
We demonstrate that this learning strategy dramatically improve performance in our experiments.

In the meta-learning methods,
gradient-based methods such as model-agnostic meta-learning (MAML)
\citep{finn2017model}
are a representative approach, which solves the inner problems by using gradient descent methods.
They require second-order derivatives of the parameters of neural networks to solve the bi-level optimization and thus impose
high computation cost
\citep{rajeswaran2019meta,bertinetto2018meta}.
The proposed method can solve the inner problem via the EM algorithm, where each EM step is easily obtained as a closed-form.
Thus, the proposed method is more efficient than MAML.
In addition, unlike MAML, the EM algorithm does not require the step size of gradient descents to be determined, which greatly affects meta-learning performance.
Another representative approach is the embedding-based methods that meta-learns neural networks for embeddings, such as prototypical network
\citep{snell2017prototypical}.
It calculates the prototype of each class as the average of the embedded training data in the same class and classifies a new example on the basis of the distance between each prototype and the embedded example.
The proposed method can be regarded as a natural extension of the prototypical network.
Specifically, in our probabilistic model, each prototype is obtained as the weighted average of the training data, where the weights are estimated from noisy labeled data in a principled manner.
A few studies formulate the inner problem as probabilistic modeling for topic modeling or clustering
\citep{iwata2021few,iwata2023meta,lee2020meta}.
Our method also incorporates the probabilistic model for learning from multiple annotators in the meta-learning.

\section {Proposed Method}

In Section \ref{form}, we formulate our problem setting.
In Section \ref{model}, we present our model to learn a classifier from noisy labeled data given by multiple annotators.
In Section \ref{metatrain}, we explain our meta-learning procedure to train our model.
Figure \ref{overview} shows the overview of our meta-learning.

\subsection{Problem Formulation}
\label{form}

In the (meta-)training phase, we are given $D$ source tasks
${\cal D} =\{ \{ ( {\bf x} _{dn}, t_{dn} ) \}_{n=1}^{N_d} \}_{d=1}^{D} $, where $D$ is the number of tasks, $N_d$ is the number of data in the $d$-th task, ${\bf x}_{dn} \in {\mathcal X} $ is the feature vector of the $n$-th example of the $d$-th task, $t_{dn} \in \{ c_{d1},\dots,c_{dK_d} \} $ is its ground truth label, $c_{dk}$ is the $k$-th class of the $d$-th task, and $K_d$ is the number of classes of the $d$-th task.
In the testing phase, we are given a small number of noisy labeled data from $R$ annotators
${\cal S} = \{ ({\bf x}_n,\{ y^{r}_{n} \} _{r \in I_n}) \} _{n=1}^{N_{\cal S}} $ (support set) in an unseen target task, where $ y_{n}^{r} \in \{ c_1, \dots, c_K \} $ is an annotation for the $n$-th example by the $r$-th annotator, $K$ is the number of classes, and $I_n \subseteq \{ 1,\dots, R \}$ is the index set of annotators that give labels for the $n$-th example.
We assume that $I_n \neq \emptyset$, i.e., each example is assigned at least one or more labels.
We make the standard assumption used in meta-learning studies
\citep{hospedales2020meta}:
the classes $\{ c_1, \dots, c_K \}$ in the target task do not overlap with those in the source tasks, and 
feature space ${\mathcal X}$ is the same across all tasks.
Our goal is to learn an accurate target task-specific classifier from ${\cal S}$.

\subsection{Model}
\label{model}

We explain how to learn a classifier given noisy labeled data from the annotators ${\cal S}$.
In the following, class $c_k$ is denoted as class $k$ for simplicity.
We first non-linearly embed each example ${\bf x} \in {\mathcal X}$ to an $M$-dimensional latent space by 
using a neural network $f$,
${\bf u} = f ( {\bf x}; \theta) \in \mathbb{R}^{M}$,
where $\theta$ is the parameters of the neural network that is common across all tasks.
We aim to meta-learn parameters $\theta$ to improve the expected test classification performance. The meta-learning procedure will be described in next section.

To obtain classifiers, we consider a probabilistic latent variable model on the latent space, where
a ground truth label for the $n$-th example $t_n \in \{ 1, \dots, K \}$ is treated as a latent variable.
We use probabilistic models because they are successfully used for learning from multiple annotators in the previous studies
\citep{dawid1979maximum,raykar2010learning,tanno2019learning,kim2022robust}.
With our model, ground truth label $t$ is generated from the categorical distribution with parameters ${\boldsymbol \pi} = (\pi_k)_{k=1}^{K}$:
$p(t = k| {\boldsymbol \pi} ) = \pi_k$
where $\pi_k \ge 0$ and $\sum_{k=1}^K \pi_k = 1$.
We assume that embedded example ${\bf u} \in {\mathbb R}^M $ is generated depending on its ground truth label $t$ as follows,
$p({\bf u}| t=k, {\bf M} ) = {\cal N} ({\bf u} | {\boldsymbol \mu }_k, {\bf I} )$,
where ${\cal N} (\cdot| {\boldsymbol \mu}, {\bf \Sigma})$ is the Gaussian distribution with mean ${\boldsymbol \mu} \in {\mathbb R}^M$ and covariance ${\bf \Sigma} \in {\mathbb R}^{M \times M}$, ${\bf I}$ is $M$-dimensional identity matrix, ${\boldsymbol \mu} _k \in \mathbb{R}^M $ is a prototype for the $k$-th class, and ${\bf M} = ( {\boldsymbol \mu} _k )_{k=1}^{K}$.
Although we assume the isotropic variance ${\bf I}$ for simplicity, we can use other covariance matrices such as full covariance matrices.
For the $r$-th annotator, we assume confusion matrix ${\bf A} _{r} \in [0,1]^{K \times K} $, where its $(l,k)$-th element ${\alpha} _{lk}^{r}$ represents the probability that the $r$-th annotator returns label $l$ when ground truth label is $k$:
$p(y^{r} = l | t= k, {\bf A}) = {\alpha}_{lk}^r$,
where $\alpha_{lk}^r \ge 0$, $\sum_{l=1}^K \alpha_{lk}^r = 1$, and ${\bf A} = ({\bf A} _r)_{r=1}^{R} $.
Such an input example-independent confusion matrix has been commonly and successfully used for modeling the annotator's ability in the previous crowdsourcing studies
\citep{chu2021learning,rodrigues2018deep,raykar2010learning,kim2022robust}.

By using the above components, the likelihood of the embedded support examples ${\bf U} = \{ {\bf u} _n \}_{n=1}^{N_{\cal S}} $ and their noisy labels ${\bf Y} = \{ \{y^r_n \}_{r \in I_n} \}_ {n=1}^{N_{\cal S}} $ is given by
\begin{align}
\label{likelihood}
&p({\bf U}, {\bf Y} | {\bf M}, {\boldsymbol \pi}, {\bf A} ) = \nonumber\\
&\prod_{n=1}^{N_{\cal S}} \sum_{t_n} \left[ p({\bf u} _n | t _n , {\bf M} ) p(t_n | {\boldsymbol \pi}) \prod _{r \in I_n} p(y_n^r |  t_n, {\bf A}) \right].
\end{align}
Note that this model is equivalent to the GMM on the latent space when excluding annotator term $\prod _{r \in I_n} p(y_n^r |  t_n, {\bf A})$.
In Eq. \eqref{likelihood},
${\bf M}, {\boldsymbol \pi}$, and ${\bf A}$ are task-specific parameters to be estimated from support set ${\cal S}$.
We estimate ${\bf M}, {\boldsymbol \pi}$, and ${\bf A}$ by maximizing the posterior probability as follows,
\begin{align}
\label{posterior}
&{\bf M}^{\ast}, {\boldsymbol \pi}^{\ast}, {\bf A}^{\ast} = \argmax _{{\bf M}, {\boldsymbol \pi}, {\bf A} } \ {\rm ln} \ p({\bf M}, {\boldsymbol \pi}, {\bf A} | {\bf U}, {\bf Y}) \nonumber\\
&= \argmax _{{\bf M}, {\boldsymbol \pi}, {\bf A} } \ {\rm ln} \ p({\bf U}, {\bf Y} | {\bf M}, {\boldsymbol \pi}, {\bf A}) + {\rm ln} \ p({\bf M}, {\boldsymbol \pi}, {\bf A}) ,
\end{align}
where we assume the conjugate prior distribution for each component of the likelihood~\footnote{We discuss the conjugate prior distributions in Section \ref{remark}.}:
\begin{align}
\label{priors}
p({\bf M}, {\boldsymbol \pi}, {\bf A}) = p({\boldsymbol \pi}) \prod_{k=1}^{K} p({\boldsymbol \mu}_k) \prod_{r=1}^R p({\bf A} _r),
\end{align}
where
\begin{align}
&p({\boldsymbol \mu} _k) \!=\!  {\mathcal N} ({\boldsymbol \mu} _k | {\bf 0}, \tau^{-1} {\bf I}), \ \ p({\boldsymbol \pi}) \!=\! \frac{\Gamma(K (b + 1))}{\Gamma(b+1)^K } \prod _{k=1}^K \pi_k^{b}, \nonumber\\
&p({\bf A} _r) \!=\! \prod _{k=1}^{K} \left[ \frac{\Gamma(K(c + 1))}{\Gamma(c+1)^K } \prod _{l=1}^{K} (\alpha_{lk}^r) ^{c} \right],
\end{align}
where $\tau >0$ is a precision parameter, $p({\boldsymbol \pi})$ is the Dirichlet distribution with the Dirichlet parameter $b > 0$, and $\Gamma(\cdot)$ is the gamma function.
$p({\bf A} _r)$ is the product of the Dirichlet distribution with parameter $c > 0$.
By using such conjugate priors, we can stabilize the EM algorithm and derive a closed-form EM step as described later.
In this paper, we treat ${\bf \tau}$, $b$, and  $c$ as hyperparameters.

We find a local optimum solution for Eq. \eqref{posterior} by using the EM algorithm, where the task-specific parameters are
updated by maximizing the following lower bound $Q$ of the objective function in Eq.~\eqref{posterior}:
\begin{align}
\label{qfunc}
&{\rm ln} \ p({\bf U}, {\bf Y} | {\bf M}, {\boldsymbol \pi}, {\bf A}) + {\rm ln} \ p({\bf M}, {\boldsymbol \pi}, {\bf A})
 \geq \nonumber\\
&\sum_{n,k=1}^{N_{\cal S},K} \lambda_{nk} {\rm ln} \frac{p({\bf u} _n | t _n\!=\!k , {\bf M} ) p(t_n\!=\!k | {\boldsymbol \pi}) \prod _{r} p(y_n^r |  t_n\!=\!k, {\bf A})}{\lambda_{nk}} \nonumber\\
&+ {\rm ln} \ p({\bf M}, {\boldsymbol \pi}, {\bf A}) \equiv Q,
\end{align}
where we used Eq. \eqref{likelihood} and Jensen's inequality to derive $Q$, $\lambda_{nk}$ is the responsibility that represents
the probability that the label of the $n$-th example is $k$-th class, $\lambda_{nk} \ge 0$, and $\sum_{k=1}^{K} \lambda_{nk}=1$.
This inequality becomes tighter as $\lambda_{nk}$ approaches true posterior $p(t_n = k | {\bf u} _n, {\bf Y}, {\bf M}, {\boldsymbol \pi}, {\bf A} )$, and the equality holds if and only if $\lambda_{nk}$ matches the true posterior
\citep{bishop2006pattern}.
The EM algorithm alternates between the E step and M step to maximize the lower bound.
Specifically, in the E step, the responsibility is calculated by analytically maximizing lower bound $Q$ w.r.t. $\lambda_{nk}$,
\begin{align}
\label{estep}
\lambda_{nk} &= p(t_n = k | {\bf u} _n, {\bf Y}, {\bf M}, {\boldsymbol \pi}, {\bf A} ) \nonumber\\ 
&= \frac{p({\bf u}_n | t_n=k, {\bf M} ) p(t_n=k| {\boldsymbol \pi} ) p({\bf Y} | t_n=k, {\bf A} )}{p({\bf u} _n, {\bf Y}, {\bf M}, {\boldsymbol \pi}, {\bf A})} \nonumber\\
&= \frac{{\mathcal N} ({\bf u} _n | {\boldsymbol \mu}_k, {\bf I}) \pi_k a_{nk}}{\sum_{l=1}^{K} {\mathcal N} ({\bf u} _n | {\boldsymbol \mu}_l, {\bf I}) \pi_l a_{nl}},
\end{align}
where $a_{nk} := p({\bf Y} | t_n=k, {\bf A} ) = \prod _{r \in I_n} \prod _{l=1}^{K} (\alpha _{lk}^r) ^{\delta(y_{n}^r,l)}$ and
$\delta(X,Y)$ is the delta function, i.e., $\delta(X,Y)=1$ if $X=Y$, and $\delta(X,Y)=0$ otherwise.

In the M step, task-specific parameters ${\bf M}$, ${\boldsymbol \pi}$, and ${\bf A}$ are calculated as follows,
\begin{align}
\label{mstep}
&{\boldsymbol \mu} _k = \frac{\sum _{n=1}^{N_{\cal S}} \lambda_{nk} {\bf u} _n }{\tau + \sum_{n=1}^{N_{\cal S}} \lambda_{nk} }, \ \ 
\pi _k = \frac{ \sum _{n=1}^{N_{\cal S}} \lambda_{nk} + b }{Kb + N_{\cal S} }, \nonumber\\
&\alpha_{lk}^r = \frac{ \sum_{n \in I^r} \lambda _{nk} \delta(y_{n}^r, l) +c } {\sum _{n \in I^r} \lambda_{nk} + Kc },
\end{align}
which are obtained analytically by maximizing the lower bound $Q$ with respect to ${\boldsymbol \mu} _k$, $\pi _k$, and $\alpha_{lk}^r$, respectively.
$I^r \subseteq \{ 1,\dots, N \} $ is the index set of the examples that are labeled by the $r$-th annotator.
The EM algorithm is guaranteed to monotonically increase the posterior probability at each step
until it reaches a local maximum.
We repeat the EM steps $J$ times for the adaptation and obtain estimated task-specific parameters ${\bf M}^{\ast} $, ${\boldsymbol \pi} ^{\ast} $, and ${\bf A} ^{\ast} $. 
As a result, the class label probability given a new example ${\bf u} = f({\bf x})$ is calculated by
\begin{align}
\label{classifier}
{\rm ln} \ p(t=k|{\bf u} ; {\mathcal S}) &\varpropto {\rm ln} \ p({\bf u} | t=k, {\bf M}^{\ast}) + {\rm ln} \ p(t=k| {\boldsymbol \pi}^{\ast} ) \nonumber\\
&= -\frac{1}{2} \parallel {\bf u} - {\boldsymbol \mu}_k^{\ast} \parallel^2 + {\rm ln} \ \pi_k^{\ast}.
\end{align}
The class label of ${\bf u}$ can be predicted by ${\rm argmax} _k [ -\frac{1}{2} \|| {\bf u} - {\boldsymbol \mu} _k^{\ast}  \||^2 + {\rm ln} \ \pi_k^{\ast} ]$.
Our generative model-based formulation can naturally treat tasks with different numbers of classes. This property would be preferable in practice, e.g., when the number of classes in the target task is different from that in the source tasks.
We note that when each class has the uniform prior, i.e., $\pi_k = \frac{1}{K}$, $\tau=0$, and each example has clean label; $\lambda_{nk} =1$ for truth class $k$, 
the adapted classifier in Eq. \eqref{classifier} is equivalent to that of the prototypical network
\citep{snell2017prototypical}, 
which is a well-known embedding-based meta-learning method. 
Therefore, our model can be regarded as a natural extension of the prototypical network for learning from multiple annotators.

\subsection{Meta-training}
\label{metatrain}

We explain the meta-training procedure for our model.
In this section, notation ${\cal S}$ is used as a support set with pseudo noisy labels in source tasks.
In the outer optimization, meta-parameters to be optimized are parameters of the neural network $\theta$.
We maximize the expected test classification performance when the classifier obtained in the inner optimization is used:
\begin{align}
\label{ext_auc}
\min_{\theta} \ & \mathbb{E} _{d} \mathbb{E} _{({\bar {\cal S}},{\cal Q}) \sim {\cal D}_{d}} \mathbb{E} _{\{ {\bf B} _r \} \sim p({\bf B})} \mathbb{E} _{{\cal S} \sim p ({\bar {\cal S}}, \{ {\bf B} _r \} )} \left[  - {\rm ln} \ p(\theta, {\cal Q} ; {\cal S}) \right],
\end{align}
where ${\cal Q} = \{ ({\bf x} _{n} , t_{n}) \} _{n=1}^{N_{\cal Q}}$ and ${\bar {\cal S}} = \{ ({\bf x} _{m} , t_{m}) \} _{m=1}^{N_{\cal S}} $ are testing data, called a query set, and a support set drawn from the same source task without overlapping, respectively.
Since source tasks have only clean labeled data, ${\bar {\cal S}}$ are also clean.
Therefore, to mimic the test environment, we artificially create noisy labels of $R$ pseudo-annotator on the basis of ${\bar {\cal S}}$.
Specifically, we first draw $R$ annotators' confusion matrices $\{ {\bf B} _r \}_{r=1}^{R} $ from a predefined confusion matrix distribution $p({\bf B})$, where
the $(l,k)$-th element of ${\bf B} _r$ is ${\beta} ^r_{lk}$ that represents the probability that the $r$-th pseudo-annotator returns label $l$ when the ground truth label is $k$.
The specific form of the confusion matrix distribution will be described in Section \ref{datas}.
By using  $\{ {\bf B} _r \}_{r=1}^{R} $ and ${\bar {\cal S}} $, support set with noisy labels
${\cal S} = \{ ({\bf x}_m, \{ y^r_m \}_{r=1}^{R} ) \}_{m=1}^{N_{\cal S}}$ are generated, where $y^r_m$ is a label for the $m$-the example annotated from the $r$-th pseudo-annotator and is generated by using the $r$-th confusion matrix ${\bf B}_r$ on truth label of the $m$-th example $t_m$.
In Eq. \eqref{ext_auc}, log-likelihood ${\rm ln} \ p({\theta}, {\cal Q} ; {\cal S})$ is described as
\begin{align}
\label{ce}
&{\rm ln} \ p({\theta}, {\cal Q} ; {\cal S}) \!=\! \sum_{n=1}^{N_{\cal Q}} {\rm ln} \ p(t_n|{\bf u} _n ; {\mathcal S})  \!=\! \!-\! \frac{1}{2} \sum_{n=1}^{N_{\cal Q}} \parallel {\bf u} _n (\theta) \!-\! {\boldsymbol \mu}_{t_n}^{\ast} (\theta) \parallel^2 \nonumber\\
&\!+\! \  {\rm ln} \ \pi_{t_n}^{\ast}  (\theta) - {\rm ln} \left( \sum_{k=1}^{K} {\rm exp} (\!- \frac{1}{2} \parallel {\bf u} _n (\theta) - {\boldsymbol \mu}_{k}^{\ast} (\theta) \parallel^2) \pi_{k}^{\ast} (\theta) \right),
\end{align} 
where we explicitly describe the dependency of neural network parameter $\theta$ for clarity.
${\bf M}^{\ast}(\theta) $ and ${\boldsymbol \pi} ^{\ast}(\theta) $ are task-specific parameters adapted to support set ${\cal S}$ .
Since ${\bf M}^{\ast}(\theta) $ and ${\boldsymbol \pi} ^{\ast}(\theta) $ are easily obtained by the EM algorithm,
the outer problem in Eq. \eqref{ext_auc} is efficiently constructed.
In addition, the outer problem is differentiable since ${\bf M}^{\ast}(\theta) $ and ${\boldsymbol \pi} ^{\ast}(\theta) $  are differentiable w.r.t. $\theta$.
Thus, we can solve it by a stochastic gradient descent method.

Algorithm \ref{arg} shows the pseudocode for our meta-training procedure.
For each iteration, we randomly sample task $d$ from source tasks (Line 2).
From the $d$-th task,
we randomly sample support set ${\bar {\cal S}} $ (Line 3).
From ${\cal D}_d \setminus {\bar {\cal S}} $,
we randomly sample query set ${\cal Q}$ (Line 4).
We generate a support set with pseudo-noisy labels ${\cal S}$ from ${\bar {\cal S}} $ (Lines 5--6).
Since source tasks contain only clean labeled data, we artificially created noisy labeled data.
When source tasks have labeled data obtained from multiple annotators, we can directly use them.
We initialize the responsibilities $\lambda_{nk} $ (Line 7).
In our experiments, we initialized $\lambda_{nk} = \frac{1}{R}\sum_{r=1}^{R} \delta(y^r_n,k) $.
By repeating the EM steps $J$ times, we can obtain the classifier adapted to support set ${\cal S}$ (Lines 8--11).
Lastly, we calculate the negative log-likelihood (loss) on a query set ${\cal Q}$ (Line 12) and update common parameters $\theta$ with the gradient of the loss (Line 13).
After meta-learning $\theta$, given a support set from multiple annotators ${\cal S}$ on unseen target tasks, we can obtain the target task-specific classifier by executing  Lines 7 to 11 with ${\cal S}$ on Algorithm \ref{arg}.
The time complexity of the EM algorithm (Eqs. \eqref{estep} and \eqref{mstep}) is ${\mathcal O}(J N_{{\cal S}} K (KR + M))$.
In our setting, the numbers of support data $N_{{\cal S}}$, classes $K$, and annotators $R$ are small, and we can freely control the embedding dimension $M$.
Additionally, as shown in Section \ref{em_num_result}, the EM algorithm works well with small $J$. Thus, the proposed method can perform fast adaptation via the EM algorithm during the meta-learning phase.

\begin{algorithm}[t]
\caption{Meta-training procedure of our model.}
\label{arg}
\begin{algorithmic}[1]
\REQUIRE Source tasks ${\cal D}$, support set size $N_{{\cal S}}$,  
query set size $N_{{\cal Q}}$, the number of the EM steps $J$, the number of pseudo-annotators $R$, and the confusion matrix distribution $p({\bf B}) $
\ENSURE Common parameters of neural network $\theta$
\REPEAT
\STATE{Randomly sample task $d$ from $\{ 1,\dots,D\}$ }
\STATE{Randomly sample support set ${\bar {\cal S}} $ with size $N_{{\cal S}}$ from $d$-th task ${\cal D}_d $}
\STATE{Randomly sample query set ${\cal Q} $ with size $N_{{\cal Q}}$ from ${\cal D}_d \setminus {\bar {\cal S}} $}
\STATE{Randomly sample $R$ pseudo-annotators' confusion matrices $\{ {\bf B} _r \}_{r=1}^{R} $ from $p({\bf B})$ }
\STATE{Generate support set with noisy labels ${\cal S} $ from both ${\bar {\cal S}} $ and  $\{ {\bf B} _r \}_{r=1}^{R} $}
\STATE{Initialize the responsibilities for ${\cal S}$, $(\lambda_{nk} )_{n=1,k=1}^{N_{{\cal S}}, K} $ }
\FOR{$j:=1$ to $J$}
\STATE{Update task-specific parameters ${\bf M}$, ${\boldsymbol \pi}$, and ${\bf A}$ by Eq. \eqref{mstep} }
\STATE{Update $(\lambda_{nk} )_{n=1,k=1}^{N_{{\cal S}}, K} $ for ${\cal S}$ by Eq. \eqref{estep} }
\ENDFOR
\STATE{Calculate the negative log-likelihood on ${\cal Q}$, $-{\rm ln} \ p(\theta, Q;S)$, with Eq. \eqref{ce}}
\STATE{Update common parameters $\theta$ with the gradients of the negative log-likelihood}
\UNTIL{End condition is satisfied;}
\end{algorithmic}
\end{algorithm}

\section{Experiments}

\subsection{Data}
\label{datas}
In the main paper, we used three real-world datasets: Omniglot, Miniimagenet, and LabelMe.
Omniglot and Miniimagenet are commonly used in meta-learning studies
\citep{snell2017prototypical,finn2017model,rajeswaran2019meta,zhang2023crowdmeta}, and
LabelMe is a real-world crowdsourcing dataset that is commonly used in crowdsourcing studies
\citep{rodrigues2018deep,chu2021learning,chu2021improve,zhang2024coupled}.
Omniglot consists of hand-written images of 964 characters (classes) from 50 alphabets
\citep{lake2015human}.
Miniimagenet consists of images of 100 classes obtained from the ILSVRC12-dataset (ImageNet)
\citep{vinyals2016matching}. 
Omniglot and Miniimagenet consist of clean labeled data.
Since meta-learning requires many tasks (classes), we used these datasets.
LabelMe consists of 2,688 images of 8 classes.
One-thousand of them were annotated from 59 workers in Amazon Mechanical Turk
\citep{rodrigues2017learning}.
Each image was labeled by an average of 2.5 workers.
The details of these datasets are described in Section \ref{data_detail}.
We additionally evaluated our method with the CIFAR-10H dataset
\citep{peterson2019human}
in Section \ref{cifar}.

For Omniglot, we randomly used 764 classes for meta-training, 100 classes for validation, and 100 classes for testing.
For Miniimagenet, we randomly used 70 classes for meta-training, 10 classes for validation, and 20 classes for testing.
For both datasets, we generated 50 tasks for both validation and testing data.
For each task, we first randomly select four classes. Then, for each class of a task, 
we randomly used a few examples (one, three or, five examples per class) for support data and 10 examples for query data. 
For LabelMe, we used all eight classes for testing and created 10 tasks.
We used the same number of support and query data as for Omniglot and Miniimagenet.
Since LabelMe does not have many classes, we used Miniimagenet for meta-training and validation.
Since LabelMe consists of classes not included in Miniimagenet and has a small number of annotators per example (2.5), this dataset is suitable for our problem setting. Thus, we used this dataset.
Note that using different datasets for source and target tasks is important since such a situation would be common in practice.
For each dataset, we created five different meta-training/validation/testing splits. 

Since Omniglot and Miniimagenet do not have data annotated by multiple annotators, we artificially create them.
Such simulated annotation is commonly used in previous studies
\citep{han2021crowdsourcing2,tu2020crowdwt,tanno2019learning,chu2021learning,li2020coupled,caomax}
since it enables us to investigate the methods in controlled environments.
For simulated annotations for target tasks, we considered three types of annotators: expert, hammer, and spammer.
The expert returns ground truth labels with probability $q$ ($0.8 < q \leq 1 $) or otherwise chooses wrong labels uniformly at random.
The hammer also returns labels on the basis of the same mechanism as the experts with a low probability for returning ground truth labels $q$ ($0.5 < q \leq 0.8$).
The spammer returns labels uniformly randomly from all classes.
These types of annotators are typical worker types in the real crowdsourcing service and thus are commonly used in experiments of previous studies
\citep{kazai2011worker,han2021crowdsourcing2,tu2020crowdwt}.
For the support set in each target task, we randomly draw $R$ annotators from a predefined annotator's distribution $p(A)$, where
$A$ takes expert (E), hammer (H), or spammer (S).
Specifically, we considered four cases $(p({\rm E}), p({\rm H}), p({\rm S})) = (0.1, 0.8, 0.1), (0.1, 0.7, 0.2), (0.1, 0.6, 0.3), (0.1, 0.5, 0.4) $.
After selecting the annotator type, we uniformly randomly select accuracy rate $q$ in each range when the type is expert or normal.
All support data were annotated from all $R$ annotators. 
For our method, pseudo-annotators are generated at each meta-training iteration from only the single annotator's distribution $(p({\rm E}), p({\rm H}), p({\rm S}))\!=\!(0.1, 0.7, 0.2)$.
The confusion matrix distribution for pseudo-annotators $p({\bf B})$ was constructed from the $(p({\rm E}), p({\rm H}), p({\rm S}))=(0.1, 0.7, 0.2)$ and accuracy rate $q$.

For Omniglot and Miniimagenet, we evaluated mean test accuracy on target tasks 
over four different target annotator distributions mentioned above
with different numbers of support data per class within $\{1,3,5\}$ and annotators $R$ within $\{ 3, 5, 7 \}$.
For LabelMe, we evaluated mean test accuracy on target tasks with different numbers of support data per class within $\{1,3,5\}$.
In Section \ref{other_ano}, we evaluated our method with other annotator's types on target tasks such as pair-wise flippers or class-wise spammers
\citep{tanno2019learning,khetan2018learning} when the same annotator distribution $(p({\rm E}), p({\rm H}), p({\rm S}))=(0.1, 0.7, 0.2)$ was used for meta-learning.
Even when the annotator's type is different between target and source tasks, our method worked well.

\begin{table*}[t]
\caption{Average test accuracies over four target annotator distributions with different numbers of support data $N_{\cal S}$ and annotators $R$ on Omniglot and Miniimagenet. The number of classes is four, and the number of support data per class (shot) is one, three, and five. Boldface denotes the best and comparable methods according to the paired t-test ($p=0.05$).}
\label{result_omg_mini}
\centering
\begin{subtable}{1.0\linewidth}
\centering
\scalebox{0.72}{ 
\begin{tabular}{rrrrrrrrrrrrrrrr}
\hline
$N_{\cal S} $ & $R$ & \multicolumn{1}{c}{Ours} & \multicolumn{1}{c}{LRMV} & \multicolumn{1}{c}{LRDS} & \multicolumn{1}{c}{RFMV} & \multicolumn{1}{c}{RFDS}  & \multicolumn{1}{c}{CL} & \multicolumn{1}{c}{CNAL}  & \multicolumn{1}{c}{PrMV}  & \multicolumn{1}{c}{PrDS}  & \multicolumn{1}{c}{MaMV}  & \multicolumn{1}{c}{MaDS}  & \multicolumn{1}{c}{MCL} & \multicolumn{1}{c}{MCNAL} & \multicolumn{1}{c}{w/o PA}  \\
\hline
4 & 3 & \textbf{0.692} & 0.410 & 0.422 & 0.347 & 0.358 & 0.569 & 0.425 & \textbf{0.666} & 0.687 & 0.655 & 0.678 & 0.661 & 0.511 & 0.433 \\
4 & 5 & \textbf{0.814} & 0.456 & 0.458 & 0.382 & 0.384 & 0.641 & 0.483 & 0.769 & 0.775 & 0.758 & 0.764 & 0.754 & 0.593 & 0.458 \\
4 & 7 & \textbf{0.855} & 0.480 & 0.485 & 0.404 & 0.410 & 0.678 & 0.485 & 0.820 & 0.834 & 0.811 & 0.823 & 0.810 & 0.606 & 0.484 \\
12 & 3 & \textbf{0.885} & 0.498 & 0.516 & 0.438 & 0.463 & 0.700 & 0.656 & 0.825 & 0.816 & 0.698 & 0.721 & 0.778 & 0.752 & 0.794 \\
12 & 5 & \textbf{0.938} & 0.552 & 0.568 & 0.491 & 0.511 & 0.776 & 0.719 & 0.893 & 0.891 & 0.777 & 0.799 & 0.855 & 0.819 & 0.871 \\
12 & 7 & \textbf{0.967} & 0.608 & 0.620 & 0.539 & 0.557 & 0.836 & 0.752 & 0.943 & 0.944 & 0.847 & 0.864 & 0.912 & 0.860 & 0.924 \\
20 & 3 & \textbf{0.930} & 0.544 & 0.562 & 0.503 & 0.535 & 0.762 & 0.732 & 0.885 & 0.871 & 0.735 & 0.760 & 0.831 & 0.805 & 0.914 \\
20 & 5 & \textbf{0.964} & 0.606 & 0.640 & 0.566 & 0.599 & 0.837 & 0.796 & 0.936 & 0.935 & 0.817 & 0.844 & 0.900 & 0.873 & 0.959 \\
20 & 7 & \textbf{0.982} & 0.662 & 0.688 & 0.620 & 0.644 & 0.886 & 0.826 & 0.969 & 0.971 & 0.882 & 0.900 & 0.943 & 0.903 & \textbf{0.981} \\
\hline
Avg. & & \textbf{0.892} & 0.535 & 0.551 & 0.476 & 0.496 & 0.743 & 0.653 & 0.856 & 0.858 & 0.776 & 0.795 & 0.827 & 0.747 & 0.758 \\
\hline
\end{tabular}
}
\caption{Omniglot}
\end{subtable}
\quad
\begin{subtable}{1.0\linewidth}
\centering
\scalebox{0.72}{
\begin{tabular}{rrrrrrrrrrrrrrrr}
\hline
$N_{\cal S} $ & $R$ & \multicolumn{1}{c}{Ours} & \multicolumn{1}{c}{LRMV} & \multicolumn{1}{c}{LRDS} & \multicolumn{1}{c}{RFMV} & \multicolumn{1}{c}{RFDS}  & \multicolumn{1}{c}{CL} & \multicolumn{1}{c}{CNAL}  & \multicolumn{1}{c}{PrMV}  & \multicolumn{1}{c}{PrDS}  & \multicolumn{1}{c}{MaMV}  & \multicolumn{1}{c}{MaDS}  & \multicolumn{1}{c}{MCL} & \multicolumn{1}{c}{MCNAL} & \multicolumn{1}{c}{w/o PA}
\\
\hline
4 & 3 & \textbf{0.387} & 0.245 & 0.248 & 0.256 & 0.258 & 0.287 & 0.276 & 0.374 & 0.380 & 0.365 & 0.367 & 0.331 & 0.294 & 0.316 \\
4 & 5 & \textbf{0.436} & 0.246 & 0.245 & 0.258 & 0.259 & 0.293 & 0.282 & 0.405 & 0.405 & 0.394 & 0.392 & 0.353 & 0.308 & 0.349 \\
4 & 7 & \textbf{0.432} & 0.243 & 0.243 & 0.262 & 0.261 & 0.301 & 0.280 & \textbf{0.432} & \textbf{0.429} & 0.409 & 0.407 & 0.369 & 0.305 & 0.372 \\
12 & 3 & \textbf{0.534} & 0.286 & 0.286 & 0.272 & 0.277 & 0.355 & 0.331 & 0.443 & 0.464 & 0.425 & 0.428 & 0.426 & 0.390 & 0.403 \\
12 & 5 & \textbf{0.571} & 0.297 & 0.298 & 0.285 & 0.288 & 0.381 & 0.349 & 0.494 & 0.510 & 0.457 & 0.467 & 0.466 & 0.425 & 0.442 \\
12 & 7 & \textbf{0.621} & 0.304 & 0.304 & 0.292 & 0.294 & 0.398 & 0.356 & 0.540 & 0.556 & 0.498 & 0.506 & 0.496 & 0.434 & 0.490  \\
20 & 3 & \textbf{0.595} & 0.311 & 0.312 & 0.304 & 0.312 & 0.397 & 0.369 & 0.485 & 0.516 & 0.437 & 0.454 & 0.490 & 0.451 & 0.500 \\
20 & 5 & \textbf{0.628} & 0.320 & 0.327 & 0.319 & 0.329 & 0.426 & 0.392 & 0.553 & 0.579 & 0.490 & 0.512 & 0.535 & 0.495 & 0.561 \\
20 & 7 & \textbf{0.674} & 0.332 & 0.333 & 0.336 & 0.340 & 0.448 & 0.403 & 0.600 & 0.616 & 0.536 & 0.553 & 0.564 & 0.508 & 0.606 \\
\hline
Avg. & & \textbf{0.542} & 0.286 & 0.288 & 0.287 & 0.291 & 0.365 & 0.338 & 0.481 & 0.495 & 0.446 & 0.454 & 0.448 & 0.401 & 0.449 \\
\hline
\end{tabular}
}
\caption{Miniimagenet}
\end{subtable}
\end{table*}

\subsection{Comparison Methods}
\label{compm}
We compared the proposed method (Ours) with 13 methods for learning from noisy annotators: two types of logistic regression (LRMV and LRDS), two types of random forest (RFMV and RFDS), two types of prototypical networks (PrMV and PrDS), two types of MAML (MaMV and MaDS),
the crowd layer
\citep{rodrigues2018deep}
(CL), a meta-learning variant of CL (MCL),
a learning method from crowds with common noise adaptation layers
\citep{chu2021learning} 
(CNAL), its meta-learning variant
(MCNAL), and
the proposed method without pseudo-annotators (w/o PA).
We also evaluated other recent methods
\citep{liang2022few,gao2022learning} 
in Sections \ref{lnl} and \ref{edcm}.

Here, methods with the symbol `MV' used majority voting for determining the label of each support example.
Methods with the symbol `DS' used the DS model for estimating labels of support data
\citep{dawid1979maximum}.
CL and CNAL are neural network-based methods.
LRMV, LRDS, RFMV, RFDS, CL, and CNAL used only target support data obtained from multiple annotators for learning classifiers.
We included these methods to investigate the effectiveness of using data in source tasks.
Since existing meta-learning studies for multiple annotators cannot be directly used for our setting as described in Section \ref{rele}, 
we created various neural network-based meta-learning methods: PrMV, PrDS, MaMV, MaDS, MCL, MCNAL, and w/o PA.
Specifically, as in previous studies
\citep{zhang2023crowdmeta,han2021crowdsourcing}, 
they meta-learn their models with clean labeled data in source tasks without the pseudo-annotation (i.e., they use the standard meta-learning procedure).
Then, they fine-tune the meta-learned models to target tasks with noisy support data by applying methods for multiple annotators such as MV, DS, CL, and CNAL.
PrMV, PrDS, MCL, and MCNAL used the prototypical networks for meta-learning.
MaMV and MaDS used MAML for meta-learning.
For the proposed method, we selected the hyperparameters on the basis of mean validation accuracy.
For all the comparison methods, the best test results were reported from their hyperparameter candidates.
The details of the comparison methods, network architectures, and hyperparameters are described in Sections \ref{comp_detail}, \ref{net_arc} and \ref{hyps}, respectively.

\begin{table*}[t]
\caption{Average test accuracies with different numbers of support data $N_{\cal S}$ on LabelMe.
The number of classes in each task is eight, and the number of support data per class is one, three, and five.
Boldface denotes the best and comparable methods according to the paired t-test ($p=0.05$).
}
\label{result_label}
\centering
\scalebox{0.72}{
\begin{tabular}{rrrrrrrrrrrrrrr}
\hline
$N_{\cal S} $ & \multicolumn{1}{c}{Ours} & \multicolumn{1}{c}{LRMV} & \multicolumn{1}{c}{LRDS} & \multicolumn{1}{c}{RFMV} & \multicolumn{1}{c}{RFDS}  & \multicolumn{1}{c}{CL} & \multicolumn{1}{c}{CNAL}  & \multicolumn{1}{c}{PrMV}  & \multicolumn{1}{c}{PrDS}  & \multicolumn{1}{c}{MaMV}  & \multicolumn{1}{c}{MaDS}  & \multicolumn{1}{c}{MCL}  & \multicolumn{1}{c}{MCNAL} & \multicolumn{1}{c}{w/o PA} \\
\hline
8 & \textbf{0.414} & 0.202 & 0.208 & 0.165 & 0.173 & 0.247 & 0.240 & 0.381 & 0.375 & 0.297 & 0.287 & 0.329 & 0.314 & 0.276 \\
24 & \textbf{0.542} & 0.261 & 0.255 & 0.243 & 0.251 & 0.359 & 0.361 & 0.514 & 0.508 & 0.404 & 0.411 & 0.509 & 0.488 & 0.412 \\
40 & \textbf{0.605} & 0.278 & 0.271 & 0.280 & 0.276 & 0.422 & 0.426 & \textbf{0.576} & 0.571 & 0.460 & 0.464 & \textbf{0.592} & \textbf{0.593} & 0.515\\
\hline
Avg. & \textbf{0.520} & 0.247 & 0.245 & 0.229 & 0.233 & 0.343 & 0.342 & 0.490 & 0.484 & 0.387 & 0.387 & 0.477 & 0.465 & 0.401 \\
\hline
\end{tabular}
}
\end{table*}

\subsection{Results}

Tables \ref{result_omg_mini} show the average accuracy on target tasks with different numbers of target support data and annotators with Omniglot and Miniimagenet, respectively.
The full results including the standard errors are described in Section \ref{ses}.
The proposed method outperformed the other methods for all cases.
As the number of annotators $R$ increased, all methods tended to improve performance.
This is because ground truth labels of given support data become easy to estimate when $R$ is large.
Non-meta-learning methods (LRMV, LRDS, RFMV, RFDS, CL, and CNAL) tended to perform worse than other meta-learning methods (PrMV, PrDS, MaMV, MaDS, MCL, MCNAL, and w/o PA), which indicates the effectiveness of using source tasks.
The proposed method outperformed these meta-learning methods.
Especially, the proposed method performed better than w/o PA by a large margin. Since the difference between both methods is whether or not the pseudo-annotation was performed during the meta-learning phase, this result shows that the pseudo-annotation is essential in our framework to learn how to learn from multiple noisy annotators.

Figure \ref{fig_spammers} shows the average test accuracies over different numbers of support data and annotators by changing the ratio of the spammers in annotators on target tasks.
As the ratio of the spammer increased, the performance of all methods tended to decrease since ground truth labels became difficult to estimate.
Nevertheless, the proposed method consistently performed better than other methods across all the ratios.
The result suggests that the proposed method can robustly learn classifiers for various annotator types even when the annotator's distribution is different between target and source tasks.

Table \ref{result_label} shows the average accuracy on target tasks with different numbers of target support data with LabelMe.
The proposed method outperformed the other methods for all cases by effectively transferring knowledge on source tasks.
This result indicates that our method can improve performance on the real crowdsourcing datasets by meta-learning with other datasets, which is preferable since datasets used for source and target tasks can be different in practice.

We investigate the computation time for the proposed method on Omniglot.
We used a Linux server with A100 GPU and 2.20Hz CPU and set the number of the EM steps to two for the proposed method, which was selected using validation data.
Meta-training time of the proposed method, PrMV, and MaMV were 1361.118, 1280.697, and 3499.138 seconds, respectively.
We omitted PrDS, MaDS, MCL, and MCLAL since their meta-learning processes were the same as PrMV or MaMV.
Meta-testing time of the proposed method, PrMV, and MaMV were 0.960, 0.928, and 2.185 seconds, respectively.
Since MaMV requires the second-order derivative of the whole parameters of the neural network for meta-learning,
it took longer than other methods. The proposed method took slightly longer for meta-learning than PrMV due to the EM algorithm, but it was still fast.
A more detailed discussion of the computation cost is described in Section \ref{detailed_comp}.

\begin{figure}[t]
    \centering
    \begin{subfigure}{0.495\hsize}
        \includegraphics[width=4.2cm]{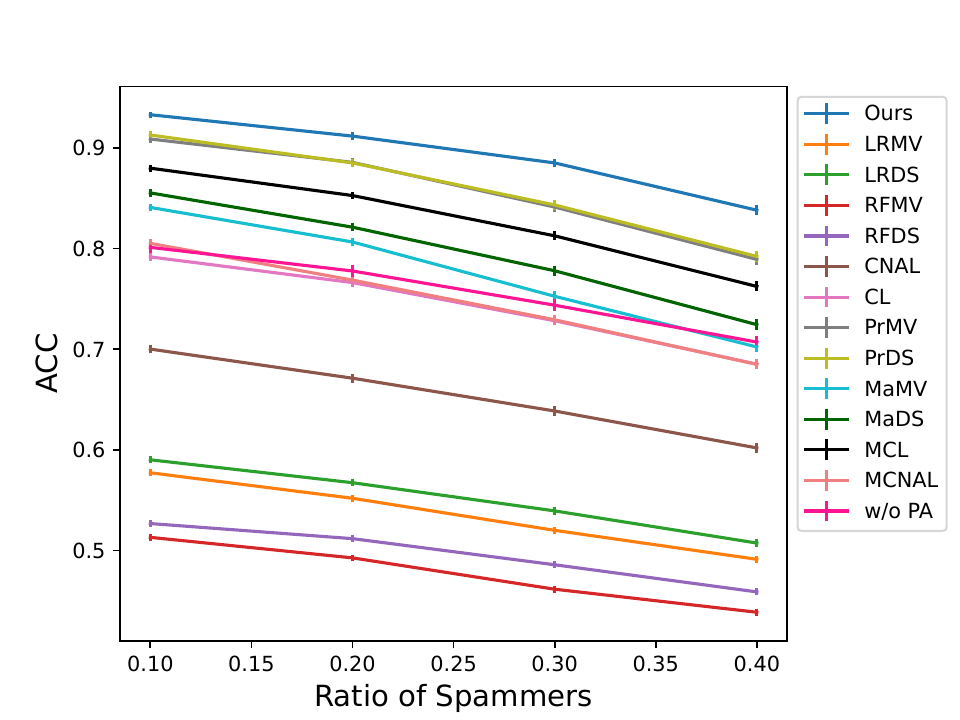}
        \caption{Omniglot}
        \label{fig:figure1}
    \end{subfigure}
    \hfill
    \begin{subfigure}{0.495\hsize}
        \includegraphics[width=4.2cm]{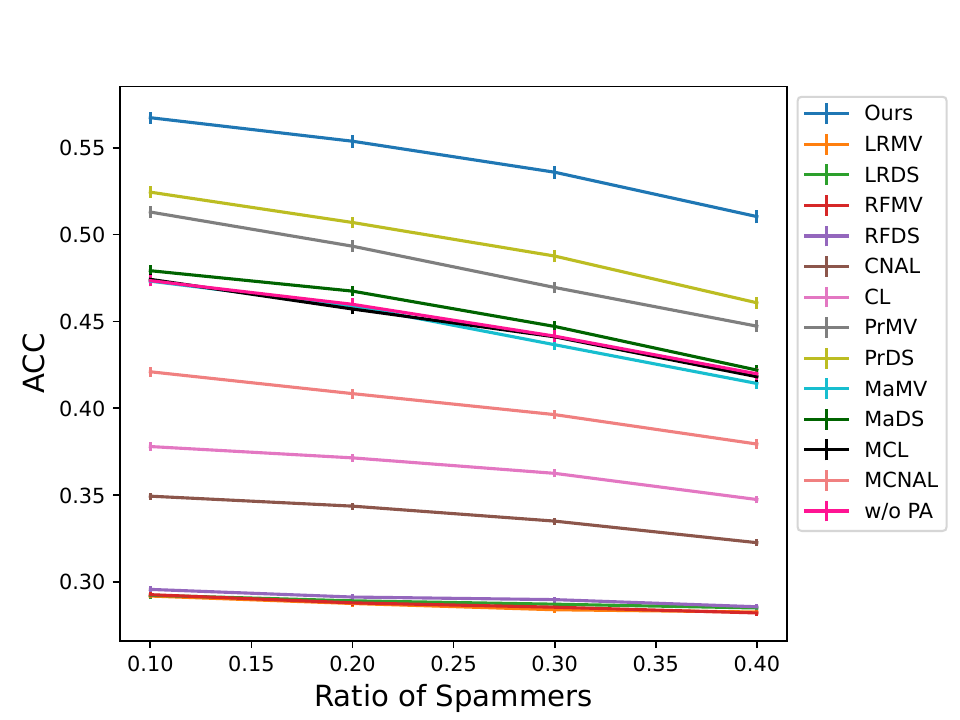}
        \caption{Miniimagenet}
        \label{fig:figure2}
    \end{subfigure}
    \caption{Average and standard errors of accuracies when changing the ratio of spammers on target tasks.}
    \label{fig_spammers}
\end{figure}

\section{Conclusion}
We proposed a meta-learning method for learning classifiers with a limited number of labeled data given from multiple annotators.
Our experiments showed that our method outperformed various existing methods.
Although our method assumes input example-independent confusion matrices for modeling annotators, 
it is interesting to extend our framework to handle input example-dependent confusion matrices as in 
\citep{gao2022learning,guo2023label,li2024transferring}.
In addition, it is also interesting to extend our framework to other crowdsourcing problems such as active learning from multiple annotators.
We also plan to use variational Bayesian inference, which is also differentiable, instead of the EM algorithm in our framework.

\section*{Impact Statements}
\label{imp}
Since the proposed method uses data from multiple tasks for meta-learning, biased data might be included. This might result in biased results.
We encourage researchers to develop methods to detect such biases automatically.
In addition, although the proposed method showed good performance, there is a possibility of misclassification in practice.
Therefore, this method should be used as a support tool for human decision-making.
For example, in the medical image diagnosis given in Section \ref{intro}, the results obtained from the proposed method are not to be used as is but as a reference (support) for a physician to make a final decision.


\newpage
\appendix
\onecolumn

\section{Remarks}
\label{remark}

\paragraph{Annotators' modeling}
The proposed method assumes input example-independent confusion matrices for modeling annotators as in previous works
\citep{chu2021learning,rodrigues2018deep,raykar2010learning,tanno2019learning,kim2022robust}.
Although this modeling is simple, the proposed method outperformed existing methods with input example-dependent confusion matrices in a small data regime (especially, with LabelMe that has real annotations) as described in Section \ref{edcm}.
The input example-dependent confusion matrices may be too complex in the case of learning from a few data.
Additionally, this simple annotator modeling enables us to easily derive the closed-form EM steps, which leads to effective and fast meta-learning.
Incorporating input example-dependent confusion matrices in our framework is one of the interesting future works.

\paragraph{Clean labeled source data}
For convenience, we assume that all labels in the source tasks are correct throughout this paper.
This assumption is common in existing meta-learning and transfer learning studies
\citep{snell2017prototypical,finn2017model,garnelo2018conditional,rajeswaran2019meta,bertinetto2018meta,han2021crowdsourcing2,liang2022few}.
However, collecting entirely clean data may not be easy in practice. In this case, we can use datasets that are expected to have relatively low noise levels as the source tasks. For example, the mean labeling accuracy is 69.2$\%$ in the LabelMe dataset used as target tasks in our experiments
\citep{chu2021improve}, 
while it is approximately 96.7$\%$ in the standard datasets
\citep{northcutt2021pervasive}. 
Therefore, in practice, to improve performance on datasets with high noise rates such as LabelMe, it would be reasonable to use datasets with relatively low noise rates as the source tasks. .
Also, we can explicitly treat noise in source tasks when each source task has enough noisy labeled data.
Specifically, we can accurately estimate their ground truth labels by applying existing methods for learning from noisy labels
\citep{NEURIPS2018_a19744e2,han2020sigua,zheng2017truth}.
Therefore, in this case, we can use the source tasks with their estimated labels for the proposed method.

\paragraph{Conjugate prior distributions}
A conjugate prior is a prior distribution that produces a posterior in the same family as the prior when combined with a specific likelihood
\citep{bishop2006pattern}.
Since the conjugate priors make the inference simpler, they are commonly used in probabilistic modeling.
In our model in Eq. \eqref{likelihood}, since the likelihood $p({\bf u} _n | t _n , {\bf M} )$ is a Gaussian, we used the Gaussian prior (conjugate prior) $p({\boldsymbol \mu} _k)$. Since the likelihoods $p(t_n | {\boldsymbol \pi})$ and $p(y_n^r |  t_n, {\bf A})$ are multinomial distributions, we used the Dirichlet priors (conjugate priors) $p({\boldsymbol \pi})$ and $p({\bf A} _r)$, respectively.
By this modeling, we can derive the closed-form EM steps in Eq. \eqref{estep} and \eqref{mstep}.

\section{Limitations}
\label{lim}

Although the proposed method worked well even when different datasets (e.g., Miniimagenet and LabelMe) or noise (annotator) models are used for source and target tasks in our experiments,
when there is a significant discrepancy between the target and source tasks, meta-learning methods, including the proposed method, may not work well.
This is a common challenge for existing meta-learning methods.
To deal with this problem, using task-augmentation methods such as
\citep{yao2021meta} 
in our framework would be one of the promising research directions.

\section{Reproducibility Statement}

For reproducibility, we described the details of the datasets in Sections \ref{datas} and \ref{data_detail}, the neural network architectures in Section \ref{net_arc}, and the hyperparameters in Section \ref{hyps}. In addition, we described the pseudo-code of our meta-learning procedure in Algorithm \ref{arg} and the detailed derivations of lower bound $Q$ of Eq. \ref{qfunc} and the EM algorithm (Eqs. \ref{estep} and \ref{mstep}) in Section \ref{full_derivations}.

\section{Our Graphical Model}

Figure \ref{fig_gm} shows the graphical model representation of our model in the latent space of a task.

\begin{figure}
\centering
\includegraphics[width=7.0cm]{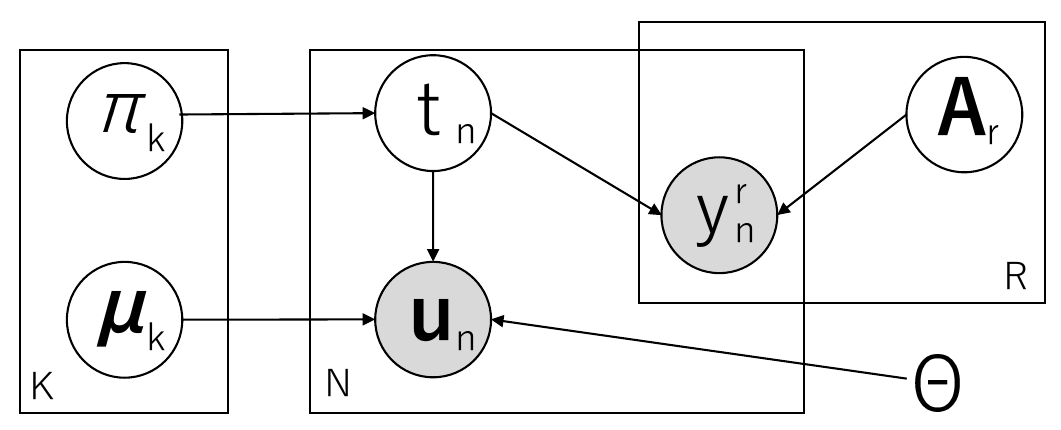}
\caption{Graphical model representation of the proposed model in the latent space of a task. The gray and non-gray nodes represent observe and unobserved variables, respectively. Embedded example ${\bf u}_n$ depends on neural network parameter ${\theta}$ since ${\bf u}_n := f ({\bf x}_n; {\theta})$. }
\label{fig_gm}
\end{figure}

\section{Extended Related Work}
There are some meta-learning methods for learning from a few noisy labeled data without considering multiple annotators
\citep{liang2022few,mazumder2021rnnp,chen2022robust,killamsetty2022nested}.
Unlike the proposed method, these methods do not use information of the annotator's id for training.
When the annotators' ids are available, learning methods that model multiple annotators tend to perform better than methods that do not consider annotators
\citep{tanno2019learning,gao2022learning}.
Therefore, this paper considers modeling multiple annotators.
We experimentally compared the proposed method with this approach in Section \ref{lnl}.

Transfer learning is related to our work
\citep{pan2009survey}.
Meta-learning is a type of fine-tuning method in transfer learning, particularly effective in very small data regimes
\citep{hospedales2020meta}. 
Since our aim is to learn from a small amount of noisy data, we focused on meta-learning.
Although the standard fine-tuning methods pre-train models with clean labeled source data, we showed that it is insufficient to our setting.
The proposed method performed well with pseudo-annotation during the meta-learning.
Domain adaptation is a well-known transfer learning approach
\citep{pan2009survey,ganin2015unsupervised,long2018conditional}.
Since it simultaneously uses source and target data for training, it requires time-consuming training as new target tasks appear. In contrast, meta-learning methods do not because the meta-learned model with source data can perform rapid and efficient adaptation with only target data. 

\section{Derivations of Lower Bound $Q$ and the EM steps}
\label{full_derivations}
In this section, we describe the detailed derivations of lower bound $Q$ in Eq. \eqref{qfunc} and the EM steps in Eqs. \eqref{estep} and \eqref{mstep}.

\begin{lemma}
$Q := \sum_{n,k=1}^{N_{\cal S},K} \lambda_{nk} \ {\rm ln} \ \frac{p({\bf u} _n | t _n\!=\!k , {\bf M} ) p(t_n\!=\!k | {\boldsymbol \pi}) \prod _{r} p(y_n^r |  t_n\!=\!k, {\bf A})}{\lambda_{nk}} + {\rm ln} \ p({\bf M}, {\boldsymbol \pi}, {\bf A})$ is a lower bound of ${\rm ln} \ p({\bf U}, {\bf Y} | {\bf M}, {\boldsymbol \pi}, {\bf A}) + {\rm ln} \ p({\bf M}, {\boldsymbol \pi}, {\bf A})$.
\end{lemma}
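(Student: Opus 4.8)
The plan is to reduce the statement to a single application of Jensen's inequality, carried out factor-by-factor in the product over the support examples. Observe first that the prior term ${\rm ln} \ p({\bf M}, {\boldsymbol \pi}, {\bf A})$ appears identically on both sides, so it cancels, and it suffices to prove that ${\rm ln} \ p({\bf U}, {\bf Y} | {\bf M}, {\boldsymbol \pi}, {\bf A}) \geq \sum_{n,k} \lambda_{nk} \, {\rm ln} \frac{p({\bf u}_n | t_n = k, {\bf M}) \, p(t_n = k | {\boldsymbol \pi}) \prod_r p(y_n^r | t_n = k, {\bf A})}{\lambda_{nk}}$ for any responsibilities satisfying $\lambda_{nk} \ge 0$ and $\sum_k \lambda_{nk} = 1$.

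First I would take the logarithm of the likelihood in Eq. \eqref{likelihood}. Since that likelihood factorizes over the $N_{\cal S}$ examples, the logarithm of the product becomes a sum, ${\rm ln} \ p({\bf U}, {\bf Y} | {\bf M}, {\boldsymbol \pi}, {\bf A}) = \sum_n {\rm ln} \sum_k g_{nk}$, where $g_{nk} := p({\bf u}_n | t_n = k, {\bf M}) \, p(t_n = k | {\boldsymbol \pi}) \prod_{r \in I_n} p(y_n^r | t_n = k, {\bf A})$ is the summand of Eq. \eqref{likelihood} evaluated at the ground-truth class $t_n = k$. Then, for each fixed $n$, I would use that $\{ \lambda_{nk} \}_{k=1}^{K}$ is a probability distribution over classes to rewrite the inner sum as an expectation, $\sum_k g_{nk} = \sum_k \lambda_{nk} \frac{g_{nk}}{\lambda_{nk}}$, and apply Jensen's inequality to the concave function ${\rm ln}$, yielding ${\rm ln} \sum_k \lambda_{nk} \frac{g_{nk}}{\lambda_{nk}} \geq \sum_k \lambda_{nk} \, {\rm ln} \frac{g_{nk}}{\lambda_{nk}}$. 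Summing this bound over $n$ and re-attaching the common prior term reproduces $Q$ exactly, which is the desired inequality.

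I do not anticipate a genuine obstacle, since this is the classical variational (EM) lower bound specialized to the present model. The only points requiring care are: confirming that $\{ \lambda_{nk} \}_k$ is a bona fide weighting distribution so that Jensen applies; reconciling the shorthand $\prod_r$ in the statement with the restricted product $\prod_{r \in I_n}$ in Eq. \eqref{likelihood} (the remaining factors being vacuous by the assumption $I_n \neq \emptyset$); and handling classes with $\lambda_{nk} = 0$ through the convention $0 \, {\rm ln} \, 0 = 0$, so that such terms drop out of both the expectation and the Jensen bound without affecting the argument. Tightness, namely equality precisely when $\lambda_{nk}$ equals the true posterior, then follows from the equality condition of Jensen's inequality, consistent with the remark made after Eq. \eqref{qfunc}.
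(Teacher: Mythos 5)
Your proposal is correct and follows essentially the same route as the paper's own proof: cancel the common prior term, rewrite the per-example log-sum as $\ln \sum_k \lambda_{nk}\, g_{nk}/\lambda_{nk}$, and apply Jensen's inequality to the concave logarithm. Your additional remarks on the $0\ln 0$ convention and on reconciling $\prod_r$ with $\prod_{r\in I_n}$ are careful touches the paper leaves implicit, but they do not change the argument.
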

\begin{proof}
Since the second terms of both equations are the same, we prove that the first term of $Q$ is a lower bound of ${\rm ln} \ p({\bf U}, {\bf Y} | {\bf M}, {\boldsymbol \pi}, {\bf A})$.

\begin{align}
\label{qfunc1}
{\rm ln} \ p({\bf U}, {\bf Y} | {\bf M}, {\boldsymbol \pi}, {\bf A}) &= \sum _{n}  {\rm ln} \sum _k \ p({\bf u} _n | t _n = k , {\bf M} ) p(t_n = k | {\boldsymbol \pi}) \prod _{r} p(y _n^r |  t _n = k, {\bf A}) \nonumber\\
&= \sum _n {\rm ln} \ \sum _k \lambda _{nk} \frac{p({\bf u} _n | t _n = k , {\bf M} ) p(t _n = k | {\boldsymbol \pi}) \prod _{r} p(y _n^r |  t _n = k, {\bf A})}{\lambda _{nk}} \nonumber\\
&\geq \sum _{n,k} \lambda _{nk} \ {\rm ln} \ \frac{p({\bf u} _n | t _n = k , {\bf M} ) p(t _n = k | {\boldsymbol \pi}) \prod _{r} p(y _n^r |  t _n = k, {\bf A})}{\lambda _{nk}},
\end{align}
where we used the definition of our probabilistic model (Eq. \eqref{likelihood}) in the first equal sign, and 
${\rm ln} (\sum_k \lambda _k z _k) \geq \sum _k \lambda _k {\rm ln} (z _k) $ for any $z _k > 0$, $\sum _{k} \lambda _k =1 $, and $\lambda _k \ge 0 $ (the Jensen's inequality) to derive the inequality. 
\end{proof}

\begin{lemma}
The E step of lower bound $Q$ is given by
\begin{align}
\lambda_{nk} = \frac{{\mathcal N} ({\bf u} _n | {\boldsymbol \mu}_k, {\bf I}) \pi_k a_{nk}}{\sum_{l=1}^{K} {\mathcal N} ({\bf u} _n | {\boldsymbol \mu}_l, {\bf I}) \pi_l a_{nl}},
\end{align}
where $a_{nk} := p({\bf Y} | t_n=k, {\bf A} ) = \prod _{r \in I_n} \prod _{l=1}^{K} (\alpha _{lk}^r) ^{\delta(y_{n}^r,l)}$ and
$\delta(X,Y)$ is the delta function, i.e., $\delta(X,Y)=1$ if $X=Y$, and $\delta(X,Y)=0$ otherwise.
The M step of lower bound $Q$ is given by
\begin{align}
&{\boldsymbol \mu} _k = \frac{\sum _{n=1}^{N_{\cal S}} \lambda_{nk} {\bf u} _n }{\tau + \sum_{n=1}^{N_{\cal S}} \lambda_{nk} }, \ \ 
\pi _k = \frac{ \sum _{n=1}^{N_{\cal S}} \lambda_{nk} + b }{Kb + N_{\cal S} }, \ \ \alpha_{lk}^r = \frac{ \sum_{n \in I^r} \lambda _{nk} \delta(y_{n}^r, l) +c } {\sum _{n \in I^r} \lambda_{nk} + Kc }.
\end{align}
\end{lemma}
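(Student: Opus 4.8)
The plan is to treat the two formulas as the standard E and M updates of EM applied to the lower bound $Q$ from Lemma~1, exploiting that $Q$ is concave and separable in the relevant blocks of variables, so each update follows from a single concave subproblem.

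For the E step, I would fix $\mathbf{M}$, $\boldsymbol\pi$, $\mathbf{A}$ and maximize $Q$ over the responsibilities $\{\lambda_{nk}\}$ subject to $\sum_k \lambda_{nk}=1$ and $\lambda_{nk}\ge 0$. Two routes are available. The cleanest is to note that the gap between $\ln p(\mathbf{U},\mathbf{Y}\mid\cdot)$ and the first term of $Q$ equals $\sum_n \mathrm{KL}(\lambda_{n\cdot}\,\|\,p(t_n{=}\cdot\mid\mathbf{u}_n,\mathbf{Y},\cdot))$, which is minimized, and the bound made tight, exactly when $\lambda_{nk}$ equals the true posterior; this is the standard fact cited after Eq.~\eqref{qfunc}. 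Alternatively, introduce a Lagrange multiplier for the simplex constraint and differentiate the first term of $Q$ in $\lambda_{nk}$ directly. Either way $\lambda_{nk}\propto p(\mathbf{u}_n\mid t_n{=}k,\mathbf{M})\,p(t_n{=}k\mid\boldsymbol\pi)\,p(\mathbf{Y}\mid t_n{=}k,\mathbf{A})$; substituting the Gaussian emission $\mathcal{N}(\mathbf{u}_n\mid\boldsymbol\mu_k,\mathbf{I})$, the prior weight $\pi_k$, and the factorized annotator likelihood $a_{nk}=\prod_{r\in I_n}\prod_l(\alpha^r_{lk})^{\delta(y^r_n,l)}$, then normalizing, yields the stated E-step formula.

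For the M step, I would fix $\{\lambda_{nk}\}$ and observe that $Q$ decomposes as a sum of terms, each involving only one parameter block, so the three updates can be derived independently. For $\boldsymbol\mu_k$ the relevant contribution is $-\tfrac12\sum_n\lambda_{nk}\|\mathbf{u}_n-\boldsymbol\mu_k\|^2-\tfrac{\tau}{2}\|\boldsymbol\mu_k\|^2$, arising from the Gaussian emission and its Gaussian prior; this is an unconstrained strictly concave quadratic, so setting its gradient to zero gives $\sum_n\lambda_{nk}(\mathbf{u}_n-\boldsymbol\mu_k)=\tau\boldsymbol\mu_k$, i.e.\ the claimed ratio. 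For $\pi_k$ and for each $\alpha^r_{lk}$, the relevant contribution is a sum of $\ln(\cdot)$ terms weighted by responsibilities plus the log-Dirichlet pseudo-count terms $b\sum_k\ln\pi_k$ and $c\sum_l\ln\alpha^r_{lk}$, subject to the simplex constraints $\sum_k\pi_k=1$ and $\sum_l\alpha^r_{lk}=1$. Introducing a Lagrange multiplier for each constraint, setting the derivative to zero, solving for the parameter, and fixing the multiplier by summing the stationarity condition over the constrained index gives the two normalized count expressions; here the identity $\sum_l\delta(y^r_n,l)=1$ collapses the denominator of $\alpha^r_{lk}$ to $\sum_{n\in I^r}\lambda_{nk}+Kc$.

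I expect the only real obstacle to be bookkeeping rather than anything conceptual: one must carefully isolate exactly which summands of $Q$ depend on each parameter, in particular tracking the restriction of the annotator likelihood to the index sets $I_n$ in the E step and $I^r$ in the M step, and correctly fold in the Dirichlet pseudo-counts $b$ and $c$ so that the normalizing denominators come out as stated. Concavity of each subproblem guarantees that the stationary points found are the unique maximizers, so no second-order verification is needed.
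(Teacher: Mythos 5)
Your proposal is correct and follows essentially the same route as the paper's proof: the E step via the KL-divergence characterization of the bound gap (tightness at the true posterior, then Bayes' theorem), the $\boldsymbol\mu_k$ update by setting the gradient of the concave quadratic to zero, and the $\pi_k$ and $\alpha^r_{lk}$ updates by Lagrange multipliers with the simplex constraints, using $\sum_l\delta(y^r_n,l)=1$ to resolve the multiplier. No gaps.
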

\begin{proof}
We first derive the E step for variable $\lambda_{nk}$. Since the second term of lower bound $Q$ in Eq. \eqref{qfunc} does not depend on $\lambda_{nk}$, we consider the first term of it (we denote this as $Q_1$). Then, we use the following general property of the lower bound
\citep{bishop2006pattern}:
\begin{align}
\label{kl_q}
{\rm ln} \ p({\bf U}, {\bf Y} | {\bf M}, {\boldsymbol \pi}, {\bf A}) - Q_1 = - \sum_n \sum_k \lambda_{nk} {\rm ln} \ \frac{ p(t_n = k| {\bf u} _n, {\bf Y}, {\bf M}, {\boldsymbol \pi}, {\bf A})}{\lambda_{nk}}.
\end{align}
The r.h.s. of Eq. \eqref{kl_q} is equivalent to the KL divergence. Since ${\rm ln} \ p({\bf U}, {\bf Y} | {\bf M}, {\boldsymbol \pi}, {\bf A})$ does not depend on $\lambda_{nk}$, maximizing $Q_1$ w.r.t. $\lambda_{nk}$ is equivalent to minimizing the KL divergence w.r.t. $\lambda_{nk}$.
Since the KL divergence takes the minimum value (zero) when two probability distributions are the same,
we can maximize $Q_1$ with
\begin{align}
\label{estep_appen}
\lambda_{nk} &= p(t_n = k | {\bf u} _n, {\bf Y}, {\bf M}, {\boldsymbol \pi}, {\bf A}).
\end{align}
By using the Bayes' theorem, we can derive
\begin{align}
\lambda_{nk} &= p(t_n = k | {\bf u} _n, {\bf Y}, {\bf M}, {\boldsymbol \pi}, {\bf A} ) \nonumber\\
&= \frac{p({\bf u}_n | t_n=k, {\bf M} ) p(t_n=k| {\boldsymbol \pi} ) p({\bf Y} | t_n=k, {\bf A} )}{p({\bf u} _n, {\bf Y}, {\bf M}, {\boldsymbol \pi}, {\bf A})} = \frac{{\mathcal N} ({\bf u} _n | {\boldsymbol \mu}_k, {\bf I}) \pi_k a_{nk}}{\sum_{l=1}^{K} {\mathcal N} ({\bf u} _n | {\boldsymbol \mu}_l, {\bf I}) \pi_l a_{nl}},
\end{align}
where $a_{nk} := p({\bf Y} | t_n=k, {\bf A} ) = \prod _{r \in I_n} \prod _{l=1}^{K} (\alpha _{lk}^r) ^{\delta(y_{n}^r,l)}$ and
$\delta(X,Y)$ is the delta function.

Next, we derive the M step for ${\boldsymbol \mu} _k$ by maximizing $Q$ w.r.t. ${\boldsymbol \mu} _k$. The derivative of $Q$ w.r.t. ${\boldsymbol \mu} _k$ is
\begin{align}
\frac{\partial Q}{\partial {\boldsymbol \mu}_k} &= \frac{ \partial  (\sum_n \lambda _{nk} {\rm ln} \ p({\bf u} _n | t _n = k , {\bf M} ) + {\rm ln} \ p({\boldsymbol \mu}_k) )} {\partial {\boldsymbol \mu}_k} \nonumber\\
&= \sum_n \lambda _{nk} ({\bf u} _n - {\boldsymbol \mu}_k) - \tau  {\boldsymbol \mu}_k.
\end{align}
Thus, from the condition of $\frac{\partial Q}{\partial {\boldsymbol \mu}_k} = 0$, we can derive
\begin{align}
{\boldsymbol \mu} _k = \frac{\sum _{n=1}^{N_{\cal S}} \lambda_{nk} {\bf u} _n }{\tau + \sum_{n=1}^{N_{\cal S}} \lambda_{nk} }.
\end{align}
Next, we derive the M step for $\pi_k$ by maximizing $Q$ w.r.t. $\pi_k$ by using the Lagrange multiplier method.
Specifically, we consider Lagrange function $L({\boldsymbol \pi}, \nu) := Q + \nu (\sum_l \pi_l -1)$, where $\nu$ is a Lagrange multiplier. The second term of $L({\boldsymbol \pi}, \nu)$ represents the constraint condition of $\pi_k$. The derivative of $L({\boldsymbol \pi}, \nu)$ w.r.t. $\pi_k$ is
\begin{align}
\frac{\partial L({\boldsymbol \pi}, \nu)}{\partial \pi_k} &= \frac{\partial (\sum_n \lambda_{nk} {\rm ln} \ p(t _n = k | {\boldsymbol \pi}) + {\rm ln} \ p({\boldsymbol \pi})) }{\partial \pi_k} + \nu \nonumber\\
&= \sum_n \frac{\lambda_{nk}}{\pi_k} + \frac{b}{\pi_k} + \nu.
\end{align}
From the condition of $\frac{\partial L({\boldsymbol \pi}, \nu)}{\partial \pi_k}=0$, we can obtain
\begin{align}
\label{pi_k_nu}
\pi _k = - \frac{\sum _{n} \lambda_{nk} + b}{\nu}.
\end{align}
By using the condition of $\frac{\partial L({\boldsymbol \pi}, \nu)}{\partial \nu} = \sum_l \pi_l - 1=0$ and Eq. \eqref{pi_k_nu}, we can obtain
\begin{align}
\label{nu1}
\nu = - (bK + \sum _{n,l} \lambda_{nl} ) = - (bK + N_{{\cal S}}), 
\end{align}
where we used $\sum_l \lambda_{nl}=1$. By substituting Eq. \eqref{nu1} for Eq. \eqref{pi_k_nu}, we can obtain
\begin{align}
\label{pik}
\pi _k = \frac{ \sum _{n=1}^{N_{\cal S}} \lambda_{nk} + b }{Kb + N_{\cal S} }.
\end{align}

Similarly, we can derive the M step for $\alpha_{lk}^{r}$ by using the Lagrange multiplier method for  $\alpha_{lk}^{r}$.
Specifically, we consider Lagrange function $L({\boldsymbol A}, \nu) := Q + \sum_{r,k} \nu^r_k (\sum_l \alpha_{lk}^{r} -1)$, where $\nu := \{ \nu^r_k\}_{r,k}$ are Lagrange multipliers. The derivative of $L({\boldsymbol A}, \nu)$ w.r.t. $\alpha_{lk}^{r}$ is
\begin{align}
\frac{\partial L({\boldsymbol A}, \nu)}{\partial \alpha_{lk}^{r} } &= \frac{\partial (\sum_{n \in I^r} \lambda_{nk} {\rm ln} \ p(y_n^r |  t_n = k, {\bf A}) + {\rm ln} \ p({\boldsymbol A})) }{\partial \alpha_{lk}^{r}} + \nu^r_k \nonumber\\
&= \sum _{n \in I^r} \frac{\lambda_{nk}}{\alpha_{lk}^{r}} \delta(y_{n}^r, l) + \frac{c}{\alpha_{lk}^{r}} + \nu^r_k.
\end{align} 
From the condition of $\frac{\partial L({\boldsymbol A}, \nu)}{\partial \alpha_{lk}^r}=0$, we can obtain
\begin{align}
\label{a_nu}
\alpha _{lk}^r = - \frac{\sum _{n \in I^r} \lambda_{nk} \delta(y_{n}^r, l) + c}{\nu^r_k}.
\end{align}
By using the condition of $\frac{\partial L({\boldsymbol A}, \nu)}{\partial \nu^r_k} = \sum_l \alpha_{lk}^r - 1=0$ and Eq. \eqref{a_nu}, we can obtain
\begin{align}
\label{nu2}
\nu^r_k = - (cK + \sum _{n \in I^r} \sum_l \lambda_{nk} \delta(y_{n}^r, l)) =  -(cK + \sum_{n \in I^r} \lambda_{nk}).
\end{align}
By substituting Eq. \eqref{nu2} for Eq. \eqref{a_nu}, we can obtain
\begin{align}
\label{arkl}
\alpha_{lk}^r = \frac{ \sum_{n \in I^r} \lambda _{nk} \delta(y_{n}^r, l) +c } {\sum _{n \in I^r} \lambda_{nk} + Kc }.
\end{align}

\end{proof}

\section{Intuitive Explanation of How EM Algorithms Can Cope with Noisy Labels}

This section explains how our EM algorithm described in Section \ref{model} contributes to the robustness.

Suppose we know the ground truth labels of data. In that case, we can easily estimate annotators' confusion matrices (i.e., the probability that the
$r$-th annotator returns label $l$ when the ground truth label is $k$; $\alpha^r_{lk}$) using the ground truth and annotators' labels. 
Conversely, if we know true annotators' confusion matrices, we can estimate the ground truth labels from the confusion matrices and annotators' labels. 
That is, ground truth labels and confusion matrices are interdependent. 
Thus, if we initialize the labels of data, we can alternately estimate the labels and confusion matrices. Here, we can make the label estimation more robust by considering multiple annotators' confusion matrices. 
If the initialization is not far from ground truth labels, we can expect to perform accurate estimation.

The EM algorithm is a systematical way to perform this procedure from a given probabilistic model. Specifically, in our setting, it iteratively estimates the label probabilities of the data ($\lambda_{nk}$) using the current estimate of annotators' confusion matrices ($\alpha^r_{lk}$) and classifier parameters (${\boldsymbol \mu}_k$ and $\pi_k$) in Eq. \eqref{estep}, then uses the estimated labels to further refine the estimation of the annotator confusion matrices and classifier parameters in Eq. \eqref{mstep}. In our experiments, simple and commonly used initialization $\lambda_{nk} = \frac{1}{R}\sum_{r=1}^{R} \delta(y^r_n,k) $, where $R$ is the number of annotators that labeled example, worked well as described in Section \ref{metatrain}. 
This is the intuitive reason why the EM algorithm can obtain accurate parameters from multiple noisy annotators.

However, this is still a challenging estimation problem since no true labels exist. The proposed method meta-learns example embeddings that make this problem easier to solve by explicitly maximizing the performance of the estimated classifier in source tasks.

\section{Discussion of the Situation Where There Are Many Sparse Annotators}

In some real-world applications, there are many annotators and each of them only labels a few support data.
Here, we discuss this situation in two cases.

When each annotator labels 'the same' support data in a target task, accurate labels of the data can be estimated since each instance has many annotations. Thus, in such cases, the proposed method and conventional meta-learning methods that simply use support data with the estimated labels (e.g., PrMV, PrDS, MaMV, and MaDS in our experiments) would perform well. In contrast, since the number of support data is small, ordinary methods for learning from multiple annotators (crowdsourcing methods) that do not use source data cannot learn accurate classifiers even if they can estimate accurate labels of support data.

When each annotator labels (generally) 'different' support data in a target task, the number of whole support data becomes large since there are many annotators. Since we focus on a small data regime, this situation is out of the scope of our work.
Ordinary crowdsourcing methods such as
\citep{zhang2024coupled} 
are more appropriate for these situations.

\section{Data Details}
\label{data_detail}

In the main paper, we used three real-world datasets: Omniglot, Miniimagenet, and LabelMe.
Omniglot and Miniimagenet are commonly used in meta-learning studies
\citep{snell2017prototypical,finn2017model,rajeswaran2019meta,hospedales2020meta}, and
LabelMe is a real-world crowdsourcing dataset that is commonly used in crowdsourcing studies
\citep{rodrigues2018deep,chu2021learning,chu2021improve}.
Omniglot consists of hand-written images of 964 characters from 50 alphabets
\citep{lake2015human}.
There were 20 images in each character class, and each image was represented by gray scale with $28 \times 28$ pixels.
The numbers of examples and classes were 19,280 and 964, respectively.
Miniimagenet consists of images of 100 classes obtained from the ILSVRC12-dataset (ImageNet)
\citep{vinyals2016matching}. 
Each image was represented by RGB with $84 \times 84$ pixels.
The numbers of examples and classes were 60,000 and 100, respectively.
Omniglot and Miniimagenet consist of only clean labeled data.
LabelMe consists of 2,688 images of 8 classes: ``highway'', ``inside city'', ``tall building'', ``street'', ``forest'', ``coast'', ``mountain'', and ``open country''. One-thousand of them were annotated from 59 workers in Amazon Mechanical Turk
\citep{rodrigues2017learning}\footnote{http://fprodrigues.com/publications/deep-crowds/}.
Each image was represented by RGB with $256 \times 256$ pixels and was labeled by an average of 2.5 workers.
Since the proposed method (and most meta-learning methods) requires the same input feature space across tasks,
we resized images of LabelMe to images with $84 \times 84$ pixels to match the image size between Miniimagenet and LabelMe.

\section{Comparison Method Details}
\label{comp_detail}

We compared the proposed method (Ours) with 13 methods for learning from noisy annotators: two types of logistic regression (LRMV and LRDS), two types of random forest (RFMV and RFDS), two types of prototypical networks (PrMV and PrDS), two types of MAML (MaMV and MaDS),
the crowd layer
\citep{rodrigues2018deep}
(CL), a meta-learning variant of CL (MCL),
a learning method from crowds with common noise adaptation layers
\citep{chu2021learning} 
(CNAL), its meta-learning variant
(MCNAL), and
the proposed method without pseudo-annotators (w/o PA).
We also evaluated other recent methods
\citep{liang2022few,gao2022learning} 
in Sections \ref{lnl} and \ref{edcm}.

Here, methods with the symbol `MV' used majority voting for determining the label of each support example.
Methods with the symbol `DS' used the DS model for estimating labels of support data
\citep{dawid1979maximum}.
Although the DS model is simple,
it has been reported to perform better than many existing methods
\citep{zheng2017truth}.
Thus, we chose it as a comparison method.
Although the original DS model does not consider the priors, 
we used them as in the proposed method because it improved the performance.
In the DS methods, the initial value of responsibility $\lambda_{nk}$ in the EM algorithm was set to $\lambda_{nk}= \frac{1}{R} \sum_{r=1}^{R} \delta (y^r_n,k)$, where $R$ is the number of annotators in a task, as in the proposed method.
When the number of the EM step $J$ is one, the initial values (soft labels) are directly used for classifier learning.
CL is a famous neural network-based learning method from multiple annotators, which have annotator-specific layers.
CNAL is a recently proposed neural network-based learning method from multiple annotators, which models both an annotator-invariant confusion matrix and annotator-specific confusion matrices.
CL and CNAL learn the whole neural networks with gradient decent methods.
LRMV, LRDS, RFMV, RFDS, CL, and CNAL used only target support data obtained from multiple annotators for learning classifiers.
We included these methods to investigate the effectiveness of using data in source tasks.
If these methods outperform the proposed method, there is no need to perform meta-learning in the first place. 
Thus, it is important to include them in comparison for extensive experiments.

Since existing meta-learning studies for multiple annotators cannot be directly used for our setting as described in Section \ref{rele}, 
we created various neural network-based meta-learning methods: PrMV, PrDS, MaMV, MaDS, MCL, MCNAL, and w/o PA.
Specifically, as in previous studies
\citep{zhang2023crowdmeta,han2021crowdsourcing}, 
they meta-learn their models with clean data in source tasks without the pseudo-annotation: 
they maximize the expected test classification performance, which is calculated with clean labeled data (query set), when using the model adapted to a few clean labeled data (support set) in source tasks.
Then, they fine-tune the meta-learned models to target tasks with noisy support data by applying methods for multiple annotators such as MV, DS, CL, and CNAL.
PrMV, PrDS, MCL, and MCNAL used the prototypical networks for meta-learning.
MaMV and MaDS used MAML for meta-learning.

\section{Neural Network Architectures}
\label{net_arc}
For embedding neural networks of the proposed method, PrMV, PrDS, and w/o PA,
we used the convolutional neural network (CNN) architecture that was composed of four
convolutional blocks by following
\citep{snell2017prototypical}. 
Each block comprised a 64-filter 3 $\times$ 3 convolution, batch normalization layer
\citep{ioffe2015batch}, 
a ReLU activation and a 2 $\times$ 2 max-pooling layer. As a result, the dimensionality of embedding $M$ were 64, 1,600, and 1,600 for Omniglot, Miniimagenet, and LabelMe, respectively. 
MCL and MCNAL used the meta-learned model with the prototypical network for feature extraction on target tasks.
For CL, CNAL, MCL, MCNAL, and MAML-based methods (MaMV and MaDS), a linear output layer was added to the CNN architecture for classifiers. 
All methods were implemented using Pytorch
\citep{paszke2019pytorch}.

\section{Hyperparameters}
\label{hyps}
For LRMV and LRDS, regularization parameter $C$ was chosen from $\{ 10^{-4}, 10^{-3}, \dots 1 \}$.
For RFMV and RFDS, the number of trees was chosen from $\{ 10, 50, 100, 200 \}$.
For MaMV and MaDS, the iteration number for the inner problems was set to three, and the step size was selected from $\{10^{-2}, 10^{-1}, 1\}$.
For the proposed method, LRDS, RFDS, PrDS, MaDS, w/o PA, the number of EM steps was chosen from $\{1,2,3,4,5,10 \}$.
For the proposed method, we set precision parameter $\tau=1.0$, Dirichlet parameter for confusion matrices $c=1.0$, and
selected the Dirichlet parameter for class-priors $b$ from $\{1,10,100\}$. 
For CL, CNAL, MCL and MCNAL, the number of fine-tuning iterations on target tasks was chosen from $\{10,100,300\}$. 
For CNAL and MCNAL, the regularization parameter $\lambda$ and embedded dimension for the probability of the annotator-invariant
confusion matrix being chosen were set to $10^{-5}$ and $20$, respectively, as in the original paper. 
For all methods except for the proposed method, the best test results were reported from their hyperparameter candidates.
For the proposed method, we selected the hyperparameters on the basis of mean validation accuracy.
For all neural network-based methods, we used the Adam optimizer
\citep{kingma2014adam} 
with a learning rate of $10^{-3}$. 
The mean validation accuracy was also used for early stopping to avoid overfitting, where
the maximum number of meta-training iterations were $30,000$ and $60,000$ for Omniglot and Miniimagenet, respectively.
We used a Linux server with A100 GPU and 2.20Hz CPU.

\section{Additional Experimental Results}

\subsection{Impact of Numbers of EM steps $J$}
\label{em_num_result}
Figure \ref{fig_emsteps} shows average and standard errors of test accuracies when changing the numbers of EM steps $J$ on all datasets.
The proposed method consistently performed better than w/o PA, which is the proposed method without pseudo-annotators, over all $J$ with all datasets. 
The proposed method showed the best results when $J=2$ or $3$ on all datasets, which was able to select using validation data on our experiments.
We note that since LabelMe has fewer target tasks for testing than other datasets, the standard errors in LabelMe were larger than the others.

\begin{figure*}[t]
  \begin{minipage}{0.325\hsize}
      \centering
      \includegraphics[width=4.5cm]{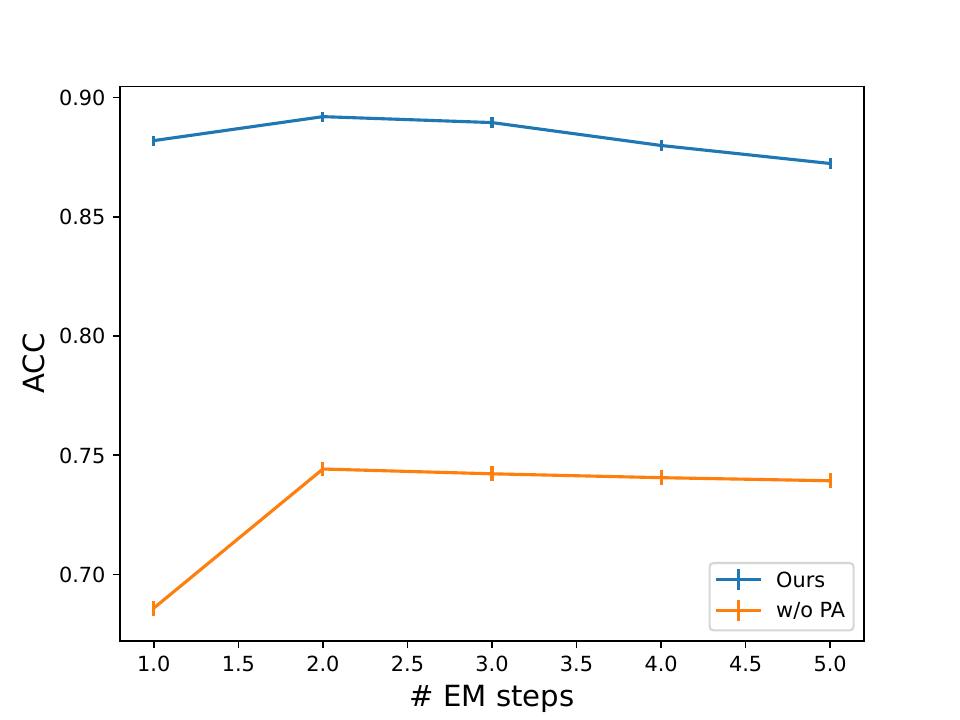} 
      \subcaption{Omniglot}
  \end{minipage}
  \begin{minipage}{0.325\hsize}
      \centering
      \includegraphics[width=4.5cm]{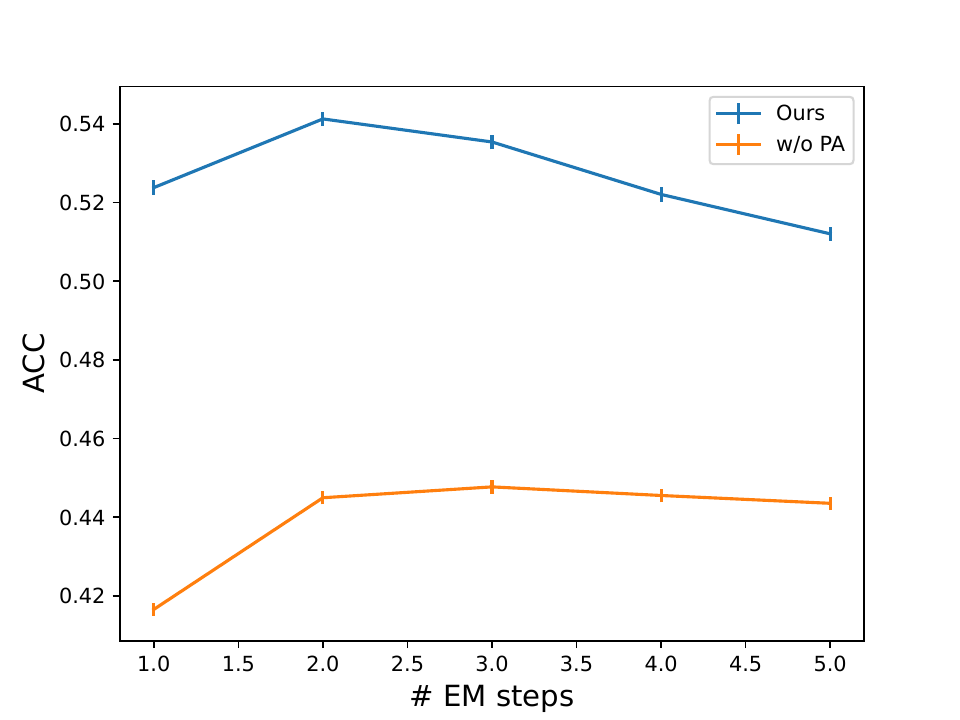}
      \subcaption{Miniimagenet}
  \end{minipage}
  \begin{minipage}{0.325\hsize}
      \centering
      \includegraphics[width=4.5cm]{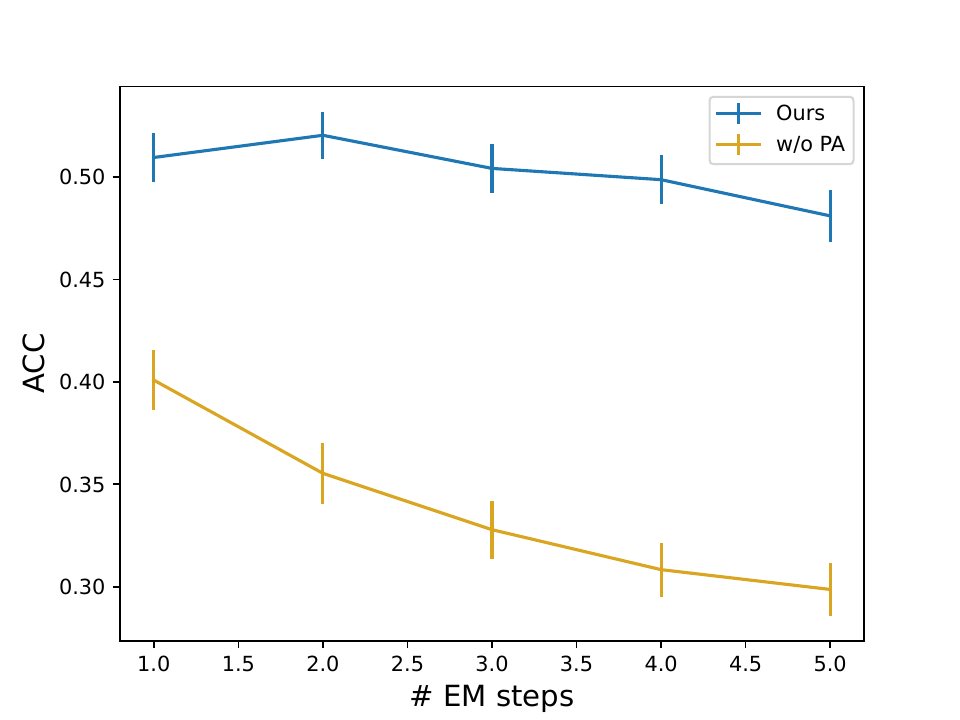}
      \subcaption{LabelMe}
  \end{minipage}
  \caption{Average and standard errors of accuracies with the proposed method when changing the numbers of EM steps $J$.}
  \label{fig_emsteps}
\end{figure*}

\subsection{Visualization of Data Representations}
Figure \ref{fig_embed} shows the two-dimensional visualization of the embedded data from the proposed method on Omniglot.
We used three support examples per class with five annotators. 
Although w/o PA, which is the proposed method without pseudo-annotation, did not accurately estimate the prototype for a class (green) due to the harmful effect of noisy support data, the proposed method was able to accurately estimate the prototype of each class.

\subsection{Estimated Annotator's Confusion Matrices}
Figure \ref{fig_workers} shows the estimated confusion matrices for annotators on a target task by the proposed method on Omniglot.
The proposed method successfully estimated the confusion matrix for each annotator from a small amount of support data (three samples per class) by using knowledge on meta-training tasks.

\begin{figure*}[t]
  \begin{minipage}{0.33\hsize}
      \centering
      \includegraphics[width=5.0cm]{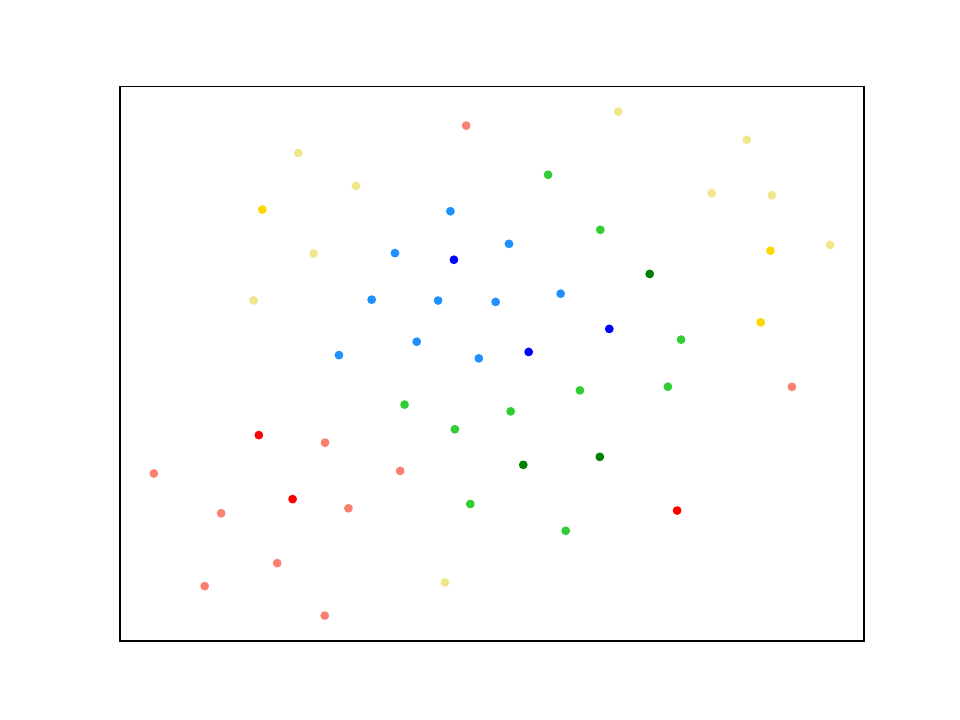}
      \subcaption{Original}
  \end{minipage}
  \begin{minipage}{0.33\hsize}
      \centering
      \includegraphics[width=5.0cm]{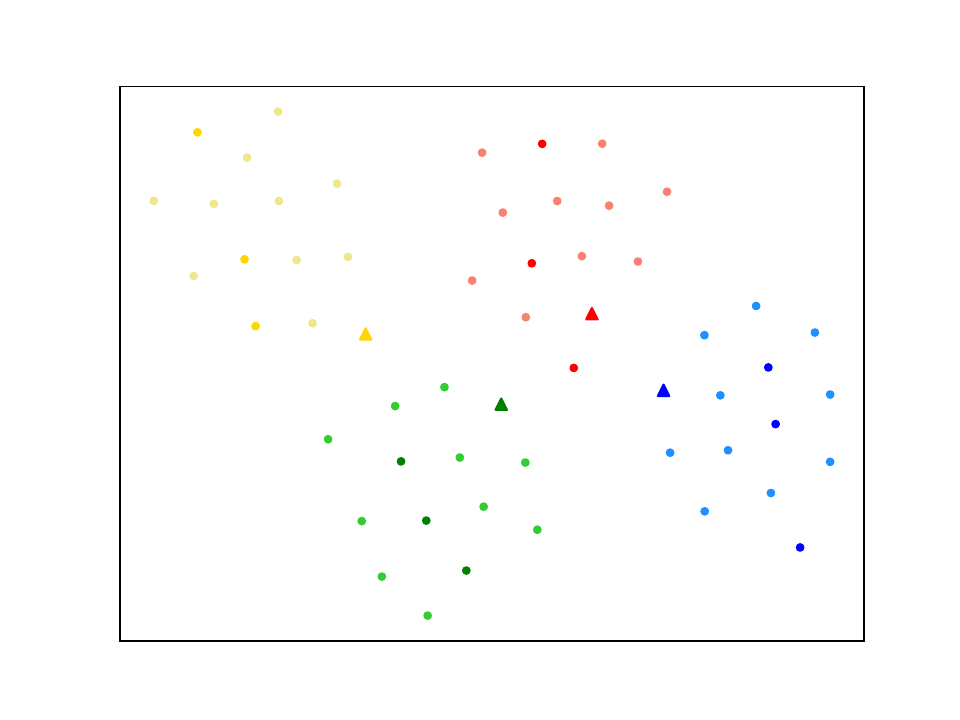}
      \subcaption{Ours}
  \end{minipage}
  \begin{minipage}{0.33\hsize}
      \centering
      \includegraphics[width=5.0cm]{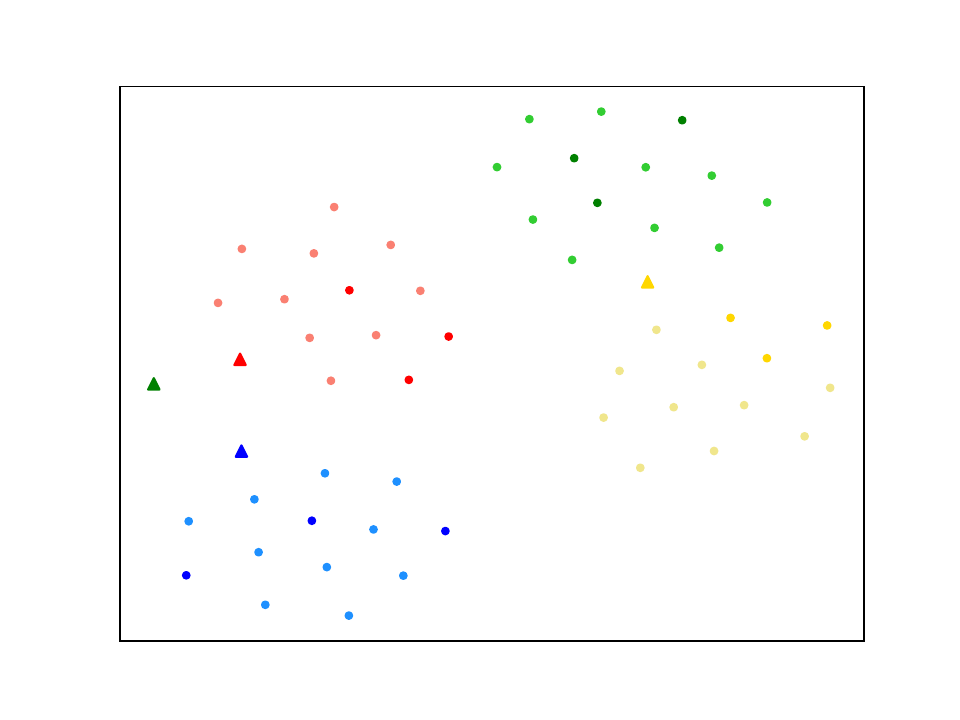}
      \subcaption{w/o PA}
  \end{minipage}
  \caption{t-SNE~\citep{van2008visualizing} visualization of embedded support and query examples obtained by the proposed method (Ours) and w/o PA in Omniglot. Each color represents a true class label. Darker and lighter colors represent support and query examples,  respectively. Original represents original examples before performing embeddings. Although we visualized support data with truth class information, neither method uses these truth class information for training. The triangle represents the prototype for each class obtained from each method.}
  \label{fig_embed}
\end{figure*}

\begin{figure*}[t]
  \begin{minipage}{0.19\hsize}
      \centering
      \includegraphics[width=3.0cm]{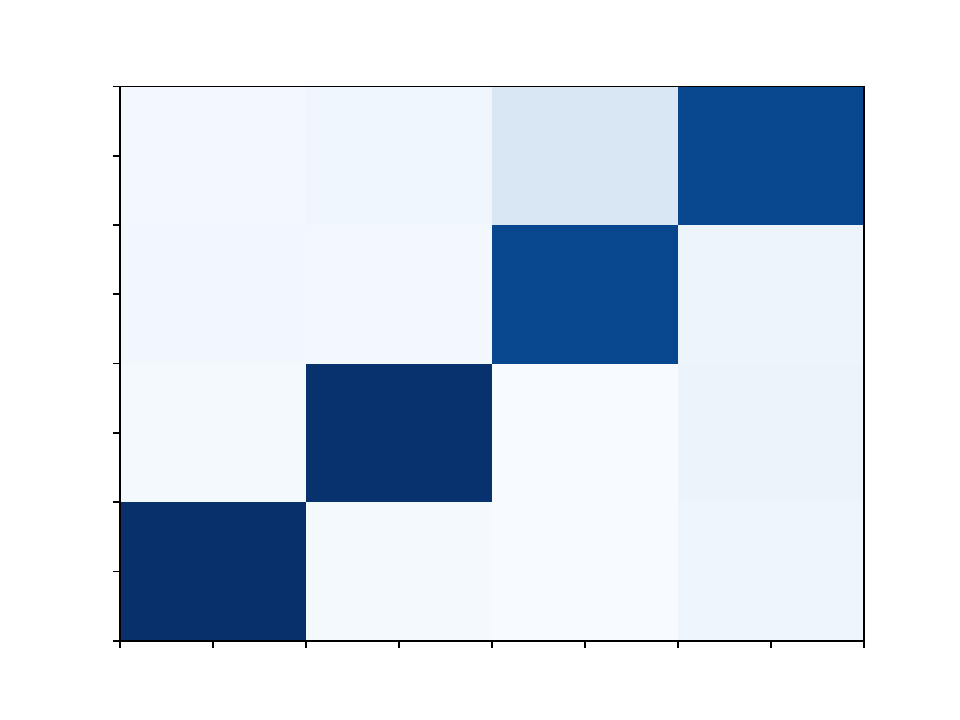}
      \subcaption{Expert ($0.88$)}
  \end{minipage}
  \begin{minipage}{0.19\hsize}
      \centering
      \includegraphics[width=3.0cm]{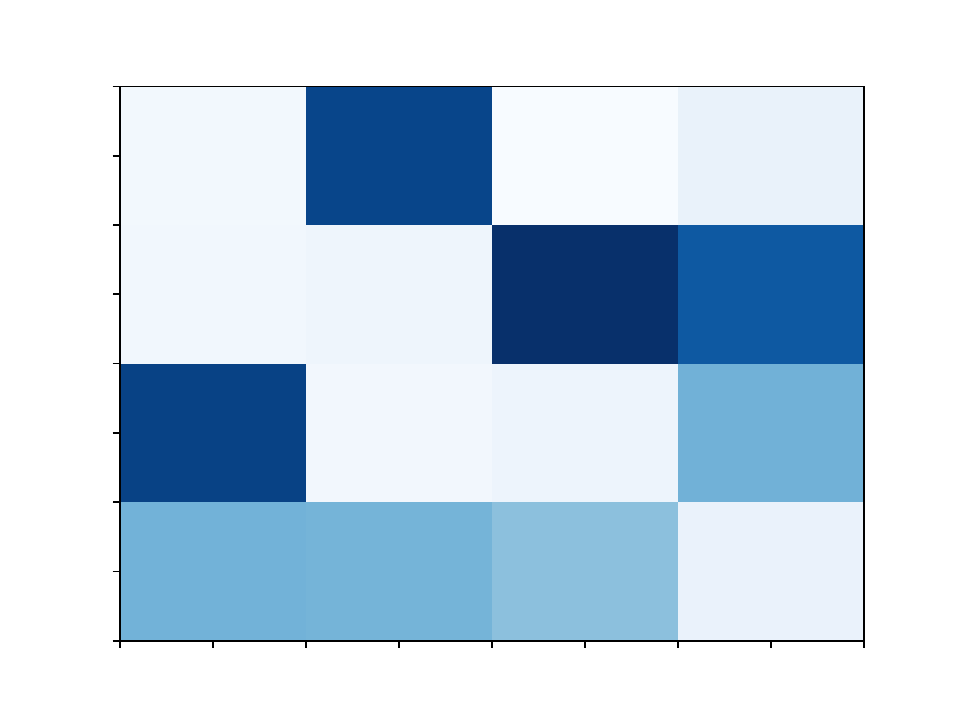}
      \subcaption{Spammer}
  \end{minipage}
  \begin{minipage}{0.19\hsize}
      \centering
      \includegraphics[width=3.0cm]{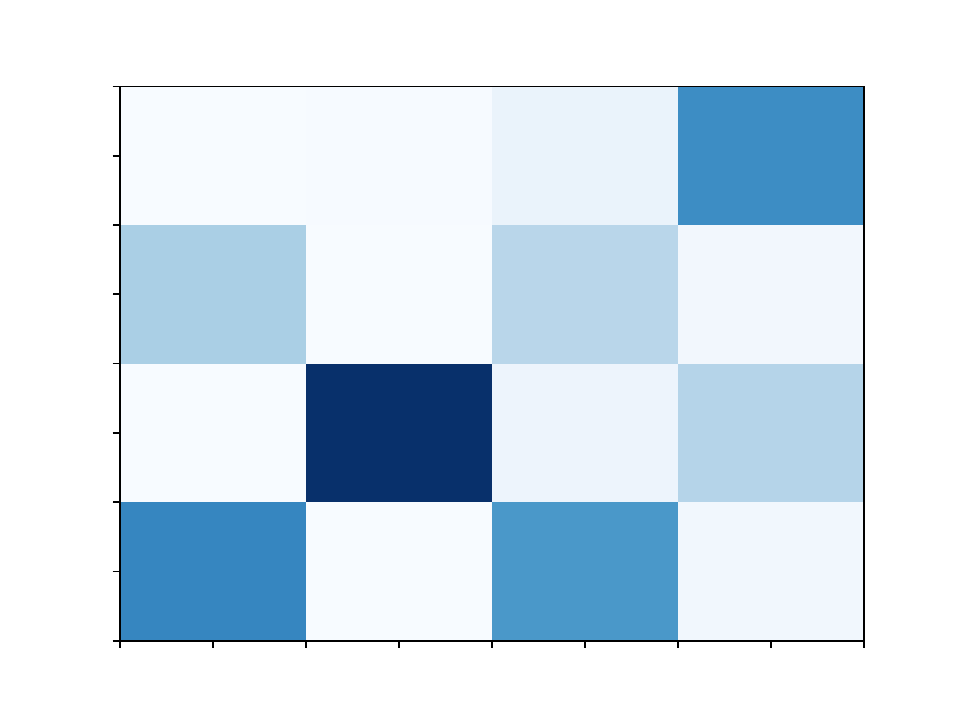}
      \subcaption{Normal ($0.64$)}
  \end{minipage}
  \begin{minipage}{0.19\hsize}
      \centering
      \includegraphics[width=3.0cm]{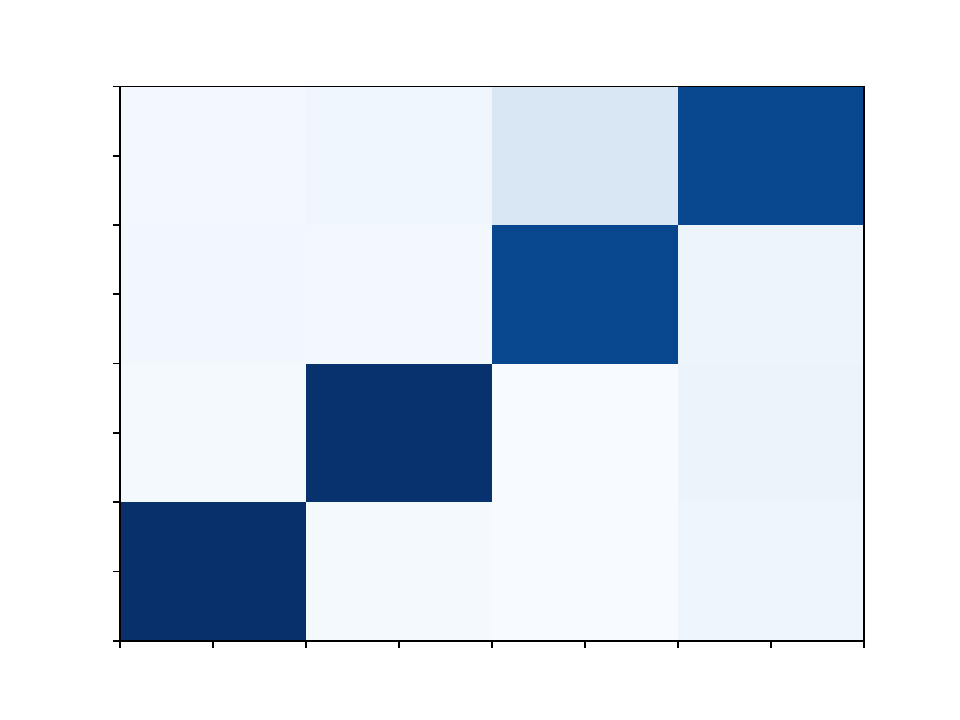}
      \subcaption{Normal ($0.79$)}
  \end{minipage}
  \begin{minipage}{0.19\hsize}
      \centering
      \includegraphics[width=3.0cm]{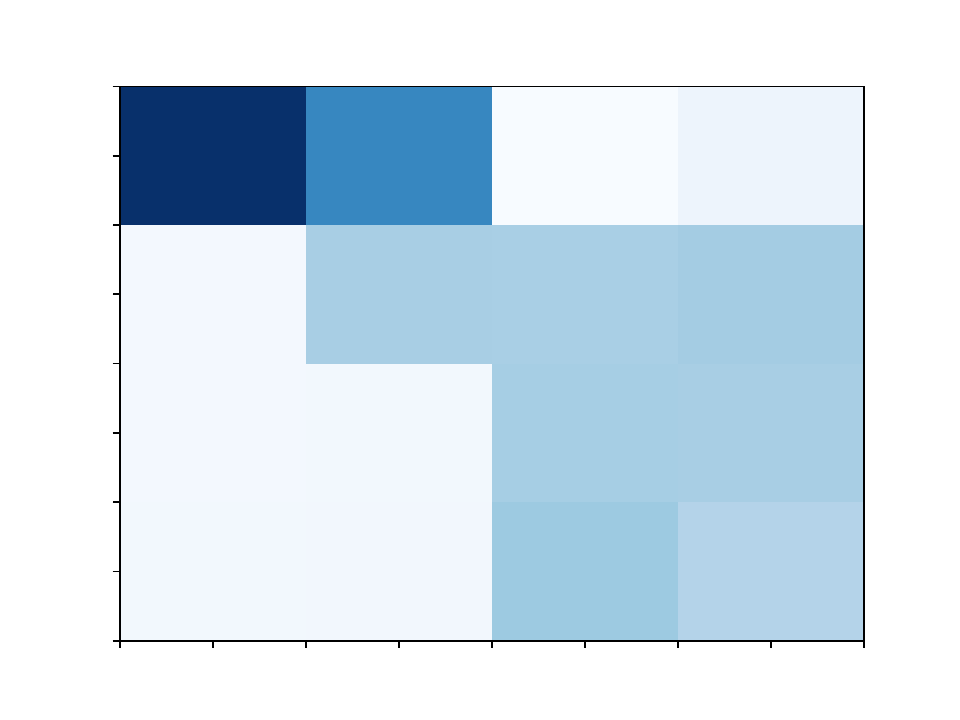}
      \subcaption{Spammer}
  \end{minipage}
  \caption{Estimated confusion matrices for annotators on a target task by the proposed method on Omniglot. We used four-class classification problem: three support examples per class and five annotators. Vertical and horizontal axes represent estimated and true class indexes, respectively. Each caption represents a true annotator type and a numerical value in the bracket represents accuracy rate $q$. Darker colors indicate higher values.}
  \label{fig_workers}
\end{figure*}

\subsection{Experiments with Other Annotator's Types}
\label{other_ano}

We evaluated the proposed method with different annotator's types from those in the main paper.
Specifically, for simulated annotations for target tasks, we considered hammers, pair-wise flippers, and class-wise spammers
\citep{tanno2019learning,khetan2018learning}.
Pair-wise flippers return correct labels with probability $q \ ( 0.5 < q \leq 0.8) $ and flip the label of each class to another label with probability $1-q$ (the flipping target was chosen uniformly at random at each class).
Class-wise spammers can be experts with accuracy rate $q=1$ for some subset of classes and spammers for the others.
For each target task, we randomly selected two of four classes as the classes for the spammers.
For the support set in each target task, we randomly draw $R$ annotators from a predefined annotator's distribution $p(A)$, where
$A$ takes hammer (H), pair-wise flipper (P), or class-wise spammer (C). 
Specifically, we considered four cases $(p({\rm H}), p({\rm P}), p({\rm C}))=(1.0, 0.0, 0.0), (0.0, 1.0, 0.0), (0.0, 0.0, 1.0)$, and $(0.33, 0.33, 0.33)$.
After selecting the annotator type, we uniformly randomly selected the accuracy rate $q$ in the each range when the type was hammers or pair-wise flippers.
All support data were annotated from all $R$ annotators. 
For the proposed method, pseudo-annotators were generated at each meta-training iteration from the annotator's distribution $(p({\rm E}), p({\rm H}), p({\rm S}))=(0.1, 0.7, 0.2)$, where E, H, S represent experts, hammers, and spammers, respectively.
Tables \ref{result_omg2} and \ref{result_mini2} show the average test accuracy with different numbers of support data and annotators on Omniglot and Miniimagenet, respectively.
The proposed method outperformed the other methods even when the target annotators' distributions were quite different from those in the meta-training phase.

\begin{table*}[t]
\caption{Average test accuracy with different numbers of support data and annotators on Omniglot with normal, pair-wise flippers, and class-wise spammers on target tasks.
The number of classes in each task is four, and the number of support data per class (shot) was one, three, and five.
Boldface denotes the best and comparable methods according to the paired t-test ($p=0.05$).}
\label{result_omg2}
\centering
\scalebox{0.7}{
\begin{tabular}{rrrrrrrrrrrrrrrr}
\hline
$N_{\cal S} $ & $R$ & \multicolumn{1}{c}{Ours} & \multicolumn{1}{c}{LRMV} & \multicolumn{1}{c}{LRDS} & \multicolumn{1}{c}{RFMV} & \multicolumn{1}{c}{RFDS}  & \multicolumn{1}{c}{CL} & \multicolumn{1}{c}{CNAL}  & \multicolumn{1}{c}{PrMV}  & \multicolumn{1}{c}{PrDS}  & \multicolumn{1}{c}{MaMV}  & \multicolumn{1}{c}{MaDS}  & \multicolumn{1}{c}{MCL}  & \multicolumn{1}{c}{MCNAL} & \multicolumn{1}{c}{w/o PA} \\
\hline
4 & 3 & \textbf{0.788} & 0.449 & 0.460 & 0.368 & 0.380 & 0.630 & 0.465 & 0.752 & 0.779 & 0.742 & 0.769 & 0.734 & 0.567 & 0.467 \\
4 & 5 & \textbf{0.893} & 0.488 & 0.487 & 0.408 & 0.410 & 0.684 & 0.523 & 0.851 & 0.847 & 0.839 & 0.835 & 0.825 & 0.665 & 0.502 \\
4 & 7 & \textbf{0.923} & 0.511 & 0.514 & 0.432 & 0.434 & 0.731 & 0.525 & 0.904 & 0.908 & 0.890 & 0.894 & 0.877 & 0.676 & 0.529 \\
12 & 3 & \textbf{0.945} & 0.544 & 0.572 & 0.472 & 0.508 & 0.758 & 0.697 & 0.893 & 0.901 & 0.773 & 0.796 & 0.840 & 0.798 & 0.840 \\
12 & 5 & \textbf{0.982} & 0.613 & 0.629 & 0.542 & 0.560 & 0.847 & 0.767 & 0.952 & 0.960 & 0.856 & 0.881 & 0.925 & 0.877 & 0.923 \\
12 & 7 & \textbf{0.990} & 0.657 & 0.669 & 0.582 & 0.600 & 0.884 & 0.792 & 0.979 & 0.984 & 0.913 & 0.926 & 0.957 & 0.905 & 0.957 \\
20 & 3 & \textbf{0.971} & 0.594 & 0.628 & 0.546 & 0.588 & 0.814 & 0.765 & 0.938 & 0.946 & 0.793 & 0.836 & 0.881 & 0.843 & 0.949 \\
20 & 5 & \textbf{0.989} & 0.667 & 0.702 & 0.617 & 0.656 & 0.893 & 0.830 & 0.978 & 0.985 & 0.883 & 0.916 & 0.949 & 0.914 & 0.983 \\
20 & 7 & \textbf{0.993} & 0.718 & 0.746 & 0.666 & 0.696 & 0.928 & 0.857 & 0.991 & \textbf{0.995} & 0.934 & 0.957 & 0.975 & 0.940 & 0.990 \\
\hline
Avg. & & \textbf{0.942} & 0.582 & 0.601 & 0.515 & 0.537 & 0.797 & 0.691 & 0.916 & 0.923 & 0.847 & 0.868 & 0.885 & 0.798 & 0.793 \\
\hline
\end{tabular}
}
\end{table*}
\begin{table*}[t!]
\caption{Average test accuracy with different numbers of support data and annotators on Miniimagenet with normal, pair-wise flippers, and class-wise spammers on target tasks.
The number of classes in each task is four, and the number of support data per class (shot) was one, three, and five.
Boldface denotes the best and comparable methods according to the paired t-test ($p=0.05$).}
\label{result_mini2}
\centering
\scalebox{0.7}{
\begin{tabular}{rrrrrrrrrrrrrrrr}
\hline
$N_{\cal S} $ & $R$ & \multicolumn{1}{c}{Ours} & \multicolumn{1}{c}{LRMV} & \multicolumn{1}{c}{LRDS} & \multicolumn{1}{c}{RFMV} & \multicolumn{1}{c}{RFDS}  & \multicolumn{1}{c}{CL} & \multicolumn{1}{c}{CNAL}  & \multicolumn{1}{c}{PrMV}  & \multicolumn{1}{c}{PrDS}  & \multicolumn{1}{c}{MaMV}  & \multicolumn{1}{c}{MaDS}  & \multicolumn{1}{c}{MCL}  & \multicolumn{1}{c}{MCNAL} & \multicolumn{1}{c}{w/o PA} \\
\hline
4 & 3 & \textbf{0.413} & 0.243 & 0.248 & 0.256 & 0.260 & 0.297 & 0.271 & 0.394 & 0.405 & 0.388 & 0.392 & 0.346 & 0.304 & 0.329  \\
4 & 5 & \textbf{0.460} & 0.241 & 0.245 & 0.263 & 0.263 & 0.295 & 0.284 & 0.429 & 0.427 & 0.414 & 0.312 & 0.364 & 0.317 & 0.354 \\
4 & 7 & \textbf{0.454} & 0.243 & 0.243 & 0.264 & 0.264 & 0.308 & 0.285 & \textbf{0.455} & \textbf{0.458} & 0.431 & 0.432 & 0.385 & 0.314 & 0.390 \\
12 & 3 & \textbf{0.576} & 0.291 & 0.296 & 0.277 & 0.283 & 0.373 & 0.342 & 0.473 & 0.499 & 0.452 & 0.460 & 0.452 & 0.401 & 0.416 \\
12 & 5 & \textbf{0.606} & 0.301 & 0.303 & 0.285 & 0.290 & 0.389 & 0.356 & 0.534 & 0.554 & 0.496 & 0.504 & 0.488 & 0.436 & 0.468 \\
12 & 7 & \textbf{0.647} & 0.309 & 0.310 & 0.298 & 0.301 & 0.412 & 0.367 & 0.580 & 0.593 & 0.536 & 0.542 & 0.522 & 0.450 & 0.514  \\
20 & 3 & \textbf{0.640} & 0.318 & 0.324 & 0.316 & 0.330 & 0.418 & 0.382 & 0.532 & 0.568 & 0.477 & 0.501 & 0.516 & 0.464 & 0.525  \\
20 & 5 & \textbf{0.662} & 0.331 & 0.336 & 0.334 & 0.340 & 0.451 & 0.403 & 0.598 & 0.624 & 0.533 & 0.559 & 0.562 & 0.510 & 0.591 \\
20 & 7 & \textbf{0.697} & 0.339 & 0.346 & 0.343 & 0.355 & 0.470 & 0.413 & 0.631 & 0.657 & 0.570 & 0.598 & 0.593 & 0.519 & 0.632 \\
\hline
Avg. & & \textbf{0.573} & 0.291 & 0.295 & 0.293 & 0.298 & 0.379 & 0.346 & 0.514 & 0.532 & 0.477 & 0.489 & 0.470 & 0.413 & 0.469 \\
\hline
\end{tabular}
}
\end{table*}
\begin{table*}[t!]
\caption{The proposed method with different Dirichlet parameters $b$. Average test accuracy over different numbers of support data and annotators on each dataset.}
\label{result_dph}
\centering
\scalebox{1.0}{
\begin{tabular}{lrrr}
\hline
Data & \multicolumn{1}{c}{Ours ($b=1$)} & \multicolumn{1}{c}{Ours ($b=10$)} &  \multicolumn{1}{c}{Ours ($b=100$)} \\
\hline
Omniglot & 0.884 & 0.890 & 0.892  \\
Miniimagenet & 0.534 & 0.540 & 0.542 \\
LabelMe & 0.503 & 0.512 & 0.520 \\
\hline
\end{tabular}
}
\end{table*}

\subsection{Impact of the Dirichlet Parameter $b$}
\label{imp_b}
In the prior distributions of our model in Eq. \eqref{priors}, $(b,c,\tau)$ are treated as hyperparameters.
As for $c$ and $\tau$, we set $c=1$ and $\tau=1$ for all experiments, which performed well.
By tuning these parameters, the performance of the proposed method might be further improved.
As for $b$, we evaluated the proposed method by changing $b$.
Table \ref{result_dph} shows the average test accuracy on each dataset.
We found that large $b$ slightly performed well.
This is because $b$ controls the class prior of the support set.
Specifically, as $b$ increases, the class priors ${\boldsymbol \pi} = (\pi_k)_{k=1}^{K}$ become uniform.
Since the true class prior of the support set was uniform in our experiments, which is a standard setting in meta-learning studies
\citep{snell2017prototypical,finn2017model,rajeswaran2019meta,bertinetto2018meta,iwata2020meta},
large $b$ worked well.
We note that even small $b=1$ outperformed other comparison methods (see Tables 1 and 2 in the main paper).

\subsection{Results with Class-imbalanced Data}

We evaluated the proposed method when there was a class-imbalance on target tasks with LabelMe since the class-imbalance often occurs in practice. LabelMe has eight classes.
In this experiment, within the eight classes, each of the three classes had one support example, and each of the five classes had five support examples.
Table \ref{result_clb} shows the average test accuracy of each method.
Here, we evaluated the proposed method with different values of Dirichlet parameter $b$.
The proposed method with $b=100$ outperformed the other methods.
As the value of $b$ increased, the performance of the proposed method also increased.
This is because the class prior ${\boldsymbol \pi} = (\pi_k)_{k=1}^{K}$ becomes uniform (i.e., $\pi_k = 1/K$) when $b$ is sufficiently large as explained in Section \ref{imp_b}.
Since $\pi_k$ represents the importance (weight) of class $k$ during adaptation,
$\pi_k = 1/K$ ensures that each class equally influences the adaptation, preventing the minority class from being ignored during adaptation 
(In contrast, since standard classification losses such as the cross-entropy loss minimize the average loss of all data, they can ignore the minority class's data). Therefore, large $b$ worked well in the class-imbalanced data.
Overall, setting large $b$ would be preferable for the proposed method.

\begin{table*}[t!]
\caption{Average test accuracy of each method on LabelMe with the class-imbalance.}
\label{result_clb}
\centering
\scalebox{0.8}{
\begin{tabular}{rrrrrrrrrrrr}
\hline
\multicolumn{1}{c}{Ours ($b=1$)} & \multicolumn{1}{c}{Ours ($b=10$)} &  \multicolumn{1}{c}{Ours ($b=100$)} & \multicolumn{1}{c}{CL} & \multicolumn{1}{c}{CNAL} &  \multicolumn{1}{c}{PrMV} & \multicolumn{1}{c}{PrDS} & \multicolumn{1}{c}{MaMV} & \multicolumn{1}{c}{MaDS} & \multicolumn{1}{c}{MCL} & \multicolumn{1}{c}{MCNAL} & \multicolumn{1}{c}{w/o PA}  \\
\hline
 0.436 & 0.453 & 0.483 & 0.327 & 0.327 & 0.449 & 0.443 & 0.340 & 0.323 & 0.441 & 0.452 & 0.340 \\
\hline
\end{tabular}
}
\end{table*}

\subsection{Comparison with Meta-learning Methods with the Pseudo-annotation}
In the main paper, we compared the proposed method with the meta-learning methods without the pseudo-annotation because there are no existing methods that use the pseudo annotation during meta-learning for learning multiple annotators.
However, we also investigated comparisons with meta-learning methods that use the pseudo-annotation during meta-training phase.
Table \ref{result_PA} shows the average test accuracy on each dataset.
The symbol ``w/ PA'' indicates that the pseudo-annotation was used.
\textcolor{black}{MCL and MCNAL used the meta-learned embedding networks by the prototypical network with the pseudo-annotation.}
The proposed method outperformed the other methods on all datasets.
These results suggest the effectiveness of our model design for learning how to learn from a few noisy labeled data obtained from multiple annotators.

\subsection{Comparison with Meta-learning Methods for Learning from Noisy Labels}
\label{lnl}

We compared the proposed method with a meta-learning method for learning from a few noisy data without modeling multiple annotators
\citep{liang2022few}.
Table \ref{result_nl} shows the average test accuracy on each dataset.
LNL is the prototypical network-based meta-learning method and estimates class's prototypes by weighting each support example.
The weight of each example is determined on the basis of the squared Euclidean distance between the example and other examples in the same (noisy) class to alleviate the harmful effect of the noisy examples.
LNL uses the pseudo-annotation during the meta-learning phase like the proposed method.
The proposed method outperformed LNL on Miniimagenet and LableMe by modeling multiple annotators.
For Omniglot, the proposed method performed worse than LNL.
Since Omniglot is relatively simple data, LNL may have worked well without modeling annotators.

\begin{table*}[t]
\caption{Average test accuracy over different numbers of support data and annotators on each dataset.
Boldface denotes the best and comparable methods according to the paired t-test ($p=0.05$).}
\label{result_PA}
\centering
\scalebox{1.0}{
\begin{tabular}{lrrrrrrr}
\hline
& & \multicolumn{1}{c}{PrMV}  & \multicolumn{1}{c}{PrDS}  & \multicolumn{1}{c}{MaMV}  & \multicolumn{1}{c}{MaDS} & \multicolumn{1}{c}{MCL} & \multicolumn{1}{c}{MCNAL}\\
Data & \multicolumn{1}{c}{Ours} & \multicolumn{1}{c}{w/ PA}  & \multicolumn{1}{c}{w/ PA}  & \multicolumn{1}{c}{w/ PA}  & \multicolumn{1}{c}{w/ PA}  & \multicolumn{1}{c}{w/ PA} & \multicolumn{1}{c}{w/ PA} \\
\hline
Omniglot & \textbf{0.892} & 0.853 & 0.856 & 0.809 & 0.808 & 0.827 & 0.751 \\
Miniimagenet & \textbf{0.542} & 0.484 & 0.504 & 0.419  & 0.424 & 0.447 & 0.404  \\
LabelMe & \textbf{0.520} & 0.475 & 0.472 & 0.375 & 0.347 & 0.474 & 0.470 \\
\hline
\end{tabular}
}
\end{table*}
\begin{table*}[t]
\caption{Comparison with learning from noisy labels. Average test accuracy over different numbers of support data and annotators on each dataset.
Boldface denotes the best and comparable methods according to the paired t-test ($p=0.05$).}
\label{result_nl}
\centering
\scalebox{1.0}{
\begin{tabular}{lrr}
\hline
Data & \multicolumn{1}{c}{Ours} & \multicolumn{1}{c}{LNL~\citep{liang2022few}}  \\
\hline
Omniglot & 0.892 & \textbf{0.899}  \\
Miniimagenet & \textbf{0.542} & 0.512 \\
LabelMe & \textbf{0.520} & 0.484 \\
\hline
\end{tabular}
}
\end{table*}

\subsection{Comparison with Example-dependent Confusion Matrix-based Methods}
\label{edcm}
The proposed model assumes example-independent confusion matrix $p(y^{r} = l | t= k, {\bf A}) = {\alpha}_{lk}^r$, which is a standard assumption used in previous studies
\citep{chu2021learning,rodrigues2018deep,raykar2010learning,tanno2019learning,kim2022robust}.
However, some recent studies assume the example-dependent confusion matrix.
Here, we compared the proposed method with Label Fusion (LF), a recent neural network-based method for multiple annotators using the example-dependent confusion matrix
\citep{gao2022learning}.
We also created a meta-learning variant of LF (MLF), which used the pre-trained embedding networks with the prototypical networks for target tasks.
For both methods (LF and MLF), we changed the number of basis matrices within $\{ 5, 10, 20 \}$ and reported the best test results.
Table \ref{result_ed} shows the average test accuracy on each dataset.
The proposed method outperformed these methods by a large margin.
This is because complex example-dependent confusion matrices are too difficult to estimate with limited data.
This result suggests that our probabilistic modeling is well-suited for small data problems.

\begin{table*}[t!]
\caption{Comparison with example-dependent confusion matrix-based methods. Average test accuracy over different numbers of support data and annotators on each dataset.
Boldface denotes the best and comparable methods according to the paired t-test ($p=0.05$).}
\label{result_ed}
\centering
\scalebox{1.0}{
\begin{tabular}{lrrr}
\hline
Data & \multicolumn{1}{c}{Ours} & \multicolumn{1}{c}{LF~\citep{gao2022learning}} & \multicolumn{1}{c}{MLF} \\
\hline
Omniglot & \textbf{0.892} & 0.573 & 0.662 \\
Miniimagenet & \textbf{0.542} & 0.313 & 0.364 \\
LabelMe & \textbf{0.520} & 0.320 & 0.457 \\
\hline
\end{tabular}
}
\end{table*}

\subsection{Evaluation on CIFAR-10H}
\label{cifar}

We additionally evaluated the proposed method on CIFAR-10H, which is a widely used real-world crowdsourcing dataset
\citep{peterson2019human}.
CIFAR-10H is image data, which consists of $10,000$ images of $10$ classes.
There are $2,571$ annotators and each annotator labels $200$ images.
Since such large-scale annotations are expensive and rare in practice, 
we selected a subset of low-quality annotators as in \citep{chu2021improve}.
Specifically, we selected $25$ annotators from the annotators with the lowest annotation accuracy.
Similar to the experiment with LabelMe, we used Miniimagenet for source tasks and CIFAR-10H for target tasks since CIFAR-10H does not have many classes.
We resized the images of CIFAR-10H to images with $84\times84$ pixels to match the image size. 
Table \ref{result_cifar} shows the average accuracy on target tasks with different numbers of target support data with CIFAR-10H.
The proposed method showed the best or comparable results in all cases.

\begin{table*}[t!]
\caption{Average test accuracies with different numbers of support data $N_{\cal S}$ on CIFAR-10H.
The number of classes in each task is ten, and the number of support data per class is one, three, and five.
Boldface denotes the best and comparable methods according to the paired t-test ($p=0.05$).
}
\label{result_cifar}
\centering
\scalebox{0.7}{
\begin{tabular}{rrrrrrrrrrrrrrr}
\hline
$N_{\cal S} $ & \multicolumn{1}{c}{Ours} & \multicolumn{1}{c}{LRMV} & \multicolumn{1}{c}{LRDS} & \multicolumn{1}{c}{RFMV} & \multicolumn{1}{c}{RFDS}  & \multicolumn{1}{c}{CL} & \multicolumn{1}{c}{CNAL}  & \multicolumn{1}{c}{PrMV}  & \multicolumn{1}{c}{PrDS}  & \multicolumn{1}{c}{MaMV}  & \multicolumn{1}{c}{MaDS}  & \multicolumn{1}{c}{MCL}  & \multicolumn{1}{c}{MCNAL} & \multicolumn{1}{c}{w/o PA} \\
\hline
10 & \textbf{0.147} & 0.128 & 0.129 & 0.117 & 0.117 & 0.137 & 0.137 & \textbf{0.150} & \textbf{0.148} & 0.131 & 0.129 & 0.139 & \textbf{0.146} & 0.111 \\
30 & \textbf{0.184} & 0.147 & 0.144 & 0.139 & 0.137 & 0.160 & 0.160 & \textbf{0.183} & \textbf{0.182} & 0.152 & 0.153 & \textbf{0.175} & \textbf{0.176} & 0.127 \\
50 & \textbf{0.223} & 0.154 & 0.150 & 0.148 & 0.142 & 0.171 & 0.173 & 0.207 & 0.204 & 0.158 & 0.159 & 0.205 & 0.207 & 0.143 \\
\hline
\end{tabular}
}
\end{table*}

\subsection{Results with Large Numbers of Support Data and Annotators}

Although this paper focuses on learning from a few noisy data given multiple annotators, 
we investigated the performance of the proposed method with large numbers of support data $N_{\cal S}$ and annotators $R$.
In this experiment, we used Miniimagenet since it has many data per class (100).
We set $N_{\cal S}=20$ and $R=7$ during the meta-learning phase.
First, we evaluated the proposed method by changing $N_{\cal S}$ values with $R=7$ in target tasks. Figure \ref{result_large_data} shows the results.
The performance increased as $N_{\cal S}$ increased, and it seemed to converge when $N_{\cal S}=120$ or $N_{\cal S}=160$.
Next, we evaluated the proposed method by changing $R$ values with $N_{\cal S}=40$ in target tasks. Figure \ref{result_large_ann} shows the results.
Even if the number of support data is not so large ($N_{\cal S}=40$), the proposed method performed well with the large number of annotators $R$.
This is because many annotators can improve label estimation performance.

\begin{table*}[t]
\caption{Results of the proposed method with a large number of support data. Average test accuracies over different annotator types when changing the number of target support data $N_{\cal S}$ with $R=7$ on Miniimagenet.}
\label{result_large_data}
\centering
\scalebox{1.0}{
\begin{tabular}{lrrrrr}
\hline
$N_{\cal S}$ & $20$ & $40$ & $80$ & $120$ & $160$ \\
\hline
Our & 0.674 & 0.747 & 0.766 & 0.775 & 0.774 \\
\hline
\end{tabular}
}
\end{table*}

\begin{table*}[t]
\caption{Results of the proposed method with a large number of annotators. Average test accuracies over different annotator types when changing the number of annotators $R$ with $N_{\cal S}=40$ on Miniimagenet.}
\label{result_large_ann}
\centering
\scalebox{1.0}{
\begin{tabular}{lrrrr}
\hline
$R$ & $7$ & $10$ & $20$ & $30$ \\
\hline
Our & 0.747 & 0.760 & 0.772 & 0.775 \\
\hline
\end{tabular}
}
\end{table*}

\subsection{Detailed Computation Cost}
\label{detailed_comp}

Although the computation (meta-training and testing) time of the proposed method was already investigated in the main paper,
we report more detailed results in Tables \ref{result_times1} and \ref{result_times2}.
Here, each method used the hyperparameter that was selected using validation data.

First, we compare the proposed method with non-meta-learning methods (LRMV, LRDS, RFMV, RFDS, CL, and CNAL) in Table \ref{result_times1}.
Although non-meta-learning methods do not require meta-learning computation, their accuracy is greatly inferior to ours.
The testing times of all methods except CL and CNAL were fast. Since CL and CNAL are neural network-based methods, their testing times, which consist of both training time from support data and prediction time for test data in target tasks, were long. In summary, although the proposed method requires additional meta-learning time, it can significantly increase classification performance, and its adaptation and prediction on new target tasks are fast.

Next, in Table \ref{result_times2}, we compare the proposed method with meta-learning methods (PrMV, PrDS, MaMV, MaDS, MCL, MCNAL, and w/o PA).
MAML-based methods (MaMV and MaDS) took longer meta-learning time than the others. This is because MAML requires costly second-order derivatives of whole parameters of the neural network to solve the bi-level optimization as described in Section \ref{rele}. Note that MAML is the most representative meta-learning method.
Embedding-based methods (Ours, PrMV, PrDS, MCL, MCNAL, and w/o PA) had similar meta-learning times.
Although our method's computational process is more complex than other embedding-based methods due to considering noisy annotators in the meta-learning phase, its meta-learning time is comparable to that of other efficient embedding methods because it performs fast adaptation with closed-form EM steps.
Since PrMV, PrDS, MCL, MCNAL, and w/o PA (MaMV and MaDS) used the same meta-learning procedure, their meta-learning time are the same.
After the meta-learning, the testing time of meta-learning methods, except for MCL and MCNAL, was fast. Since MCL and MCNAL required many fine-tuning iterations, their testing time was long.
The accuracy of the proposed method is the best.
In summary, the proposed method is superior in accuracy and efficiency.

\begin{table}[t!]
\caption{Comparison of the proposed method and non-meta-learning methods on Omniglot dataset with $N_{{\cal S}}=4$ and $R=5$.
Testing time consists of training time from support data and prediction time for test data on target tasks.
}
\label{result_times1}
\centering
\scalebox{0.85}{
\begin{tabular}{lrrrrrrr}
\hline
 & Ours & LRMV & LRDS & RFMV & RFDS & CL & CNAL \\
\hline
Meta-train time [sec] & 1361.12 & - & - & - & - & - & - \\
Testing time [sec] & 0.960 & 0.679 & 0.734  & 3.995 & 4.520 &112.19 & 166.21 \\
Accuracy & 0.814 & 0.456 & 0.458 & 0.382 & 0.384 & 0.641 & 0.483 \\
\hline
\end{tabular}
}
\end{table}
\begin{table}[t!]
\caption{Comparison of the proposed method and meta-learning methods on Omniglot dataset with $N_{{\cal S}}=4$ and $R=5$.
Testing time consists of training time from support data and prediction time for test data on target tasks.
}
\label{result_times2}
\centering
\scalebox{0.85}{
\begin{tabular}{lrrrrrrrr}
\hline
  & Ours & PrMV & PrDS & MaMV & MaDS & MCL & MCNAL & w/o PA \\
\hline
Meta-train time [sec] & 1361.12 & 1280.70 & 1280.70 & 3499.14  & 3499.14 & 1280.70 & 1280.70  & 1280.70 \\
Testing time [sec] & 0.960 &  0.928 & 0.947 & 2.185 & 2.192 & 113.59 & 166.25 & 0.948 \\
Accuracy & 0.814 & 0.769 & 0.775 & 0.758 & 0.764 & 0.754 & 0.593 & 0.458 \\ 
\hline
\end{tabular}
}
\end{table}

\subsection{Full Results with Standard Errors}
\label{ses}
In the main paper, we only reported mean test accuracies in Tables \ref{result_omg_mini} and \ref{result_label} due to the lack of space.
Here, we reported the full results including standard errors in Tables \ref{result_omg_mini_new}, \ref{result_label_new}, and \ref{result_cifar_new}.
The proposed method outperformed the other methods.
The standard errors of all methods tended to decrease as the number of support data or annotators increased since the large number of data and annotators generally leads to stable prediction. 

\begin{table*}[t]
\caption{Average test accuracies and standard errors over four target annotator distributions with different numbers of support data $N_{\cal S}$ and annotators $R$ on Omniglot and Miniimagenet. The number of classes is four, and the number of support data per class (shot) is one, three, and five. 
}
\label{result_omg_mini_new}
\centering
\begin{subtable}{1.0\linewidth}
\centering
\scalebox{0.8}{
\begin{tabular}{rrrrrrrrr}
\hline
$N_{\cal S} $ & $R$ & Ours & LRMV & LRDS & RFMV & RFDS & CL & CNAL   \\
\hline 
4 & 3 & 0.692 (0.009) & 0.410 (0.004) & 0.422 (0.005) & 0.347 (0.004) & 0.358 (0.004) & 0.569 (0.007) & 0.425 (0.005) \\
4 & 5 & 0.814 (0.007) & 0.456 (0.004) & 0.458 (0.004) & 0.382 (0.004) & 0.384 (0.004) & 0.641 (0.006) & 0.483 (0.005) \\
4 & 7 & 0.855 (0.006) & 0.480 (0.004) & 0.485 (0.004) & 0.404 (0.004) & 0.410 (0.004) & 0.678 (0.006) & 0.485 (0.005) \\
12 & 3 & 0.885 (0.007) & 0.498 (0.005) & 0.516 (0.005) & 0.438 (0.004) & 0.463 (0.004) & 0.700 (0.006) & 0.656 (0.006) \\
12 & 5 & 0.938 (0.005) & 0.552 (0.005) & 0.568 (0.005) & 0.491 (0.004) & 0.511 (0.004) & 0.776 (0.005) & 0.719 (0.005) \\
12 & 7 & 0.967 (0.003) & 0.608 (0.004) & 0.620 (0.004) & 0.539 (0.004) & 0.557 (0.004) & 0.836 (0.004) & 0.752 (0.005) \\
20 & 3 & 0.930 (0.006) & 0.544 (0.005) & 0.562 (0.005) & 0.503 (0.005) & 0.535 (0.005) & 0.762 (0.006) & 0.732 (0.006) \\
20 & 5 & 0.964 (0.003) & 0.606 (0.005) & 0.640 (0.005) & 0.566 (0.005) & 0.599 (0.005) & 0.837 (0.005) & 0.796 (0.005) \\
20 & 7 & 0.982 (0.002) & 0.662 (0.004) & 0.688 (0.004) & 0.620 (0.004) & 0.644 (0.004) & 0.886 (0.004) & 0.826 (0.004) \\
\hline
\end{tabular}
}
\end{subtable}
\quad
\begin{subtable}{1.0\linewidth}
\centering
\scalebox{0.8}{
\begin{tabular}{rrrrrrrrr}
$N_{\cal S} $ & $R$  & PrMV  & PrDS  & MaMV  & MaDS  & MCL & MCNAL & w/o PA  \\
\hline 
4 & 3 & 0.666 (0.009) & 0.687 (0.009) & 0.655 (0.008) & 0.678 (0.009) & 0.661 (0.008) & 0.511 (0.007) & 0.433 (0.006) \\
4 & 5 & 0.769 (0.007) & 0.775 (0.007) & 0.758 (0.007) & 0.764 (0.007) & 0.754 (0.007) & 0.593 (0.007) & 0.458 (0.006) \\
4 & 7 & 0.820 (0.007) & 0.834 (0.007) & 0.811 (0.007) & 0.823 (0.007) & 0.810 (0.006) & 0.606 (0.006) & 0.484 (0.006) \\
12 & 3 & 0.825 (0.007) & 0.816 (0.008) & 0.698 (0.007) & 0.721 (0.007) & 0.778 (0.007) & 0.752 (0.006) & 0.794 (0.007) \\
12 & 5 & 0.893 (0.006) & 0.891 (0.006) & 0.777 (0.006) & 0.799 (0.006) & 0.855 (0.005) & 0.819 (0.005) & 0.871 (0.006) \\
12 & 7 & 0.943 (0.004) & 0.944 (0.004) & 0.847 (0.005) & 0.864 (0.005) & 0.912 (0.004) & 0.860 (0.005) & 0.924 (0.005) \\
20 & 3 & 0.885 (0.006) & 0.871 (0.007) & 0.735 (0.007) & 0.760 (0.007) & 0.831 (0.006) & 0.805 (0.006) & 0.914 (0.006) \\
20 & 5 & 0.936 (0.005) & 0.935 (0.005) & 0.817 (0.006) & 0.844 (0.006) & 0.900 (0.005) & 0.873 (0.005) & 0.959 (0.004) \\
20 & 7 & 0.969 (0.003) & 0.971 (0.003) & 0.882 (0.005) & 0.900 (0.005) & 0.943 (0.003) & 0.903 (0.004) & 0.981 (0.002) \\
\hline
\end{tabular}
}
\caption{Omniglot}
\end{subtable}
\quad
\begin{subtable}{1.0\linewidth}
\centering
\scalebox{0.8}{
\begin{tabular}{rrrrrrrrr}
\hline
$N_{\cal S} $ & $R$ & \multicolumn{1}{c}{Ours} & \multicolumn{1}{c}{LRMV} & \multicolumn{1}{c}{LRDS} & \multicolumn{1}{c}{RFMV} & \multicolumn{1}{c}{RFDS}  & \multicolumn{1}{c}{CL} & \multicolumn{1}{c}{CNAL}  
\\
\hline
4 & 3 & 0.387 (0.005) & 0.245 (0.002) & 0.248 (0.002) & 0.256 (0.002) & 0.258 (0.002) & 0.287 (0.003) & 0.276 (0.002) \\
4 & 5 & 0.436 (0.005) & 0.246 (0.002) & 0.245 (0.002) & 0.258 (0.002) & 0.259 (0.002) & 0.293 (0.003) & 0.282 (0.002) \\
4 & 7 & 0.432 (0.005) & 0.243 (0.002) & 0.243 (0.002) & 0.262 (0.002) & 0.261 (0.002) & 0.301 (0.003) & 0.280 (0.002) \\
12 & 3 & 0.534 (0.005) & 0.286 (0.003) & 0.286 (0.003) & 0.272 (0.002) & 0.277 (0.002) & 0.355 (0.003) & 0.331 (0.003) \\
12 & 5 & 0.571 (0.005) & 0.297 (0.003) & 0.298 (0.003) & 0.285 (0.002) & 0.288 (0.002) & 0.381 (0.003) & 0.349 (0.003) \\
12 & 7 & 0.621 (0.004) & 0.304 (0.003) & 0.304 (0.003) & 0.292 (0.002) & 0.294 (0.003) & 0.398 (0.003) & 0.356 (0.003) \\
20 & 3 & 0.595 (0.005) & 0.311 (0.003) & 0.312 (0.003) & 0.304 (0.002) & 0.312 (0.003) & 0.397 (0.003) & 0.369 (0.003) \\
20 & 5 & 0.628 (0.004) & 0.320 (0.003) & 0.327 (0.003) & 0.319 (0.002) & 0.329 (0.003) & 0.426 (0.003) & 0.392 (0.003) \\
20 & 7 & 0.674 (0.004) & 0.332 (0.003) & 0.333 (0.003) & 0.336 (0.003) & 0.340 (0.003) & 0.448 (0.003) & 0.403 (0.003) \\
\hline
\end{tabular}
}
\end{subtable}
\quad
\begin{subtable}{1.0\linewidth}
\centering
\scalebox{0.8}{
\begin{tabular}{rrrrrrrrr}
$N_{\cal S} $ & $R$ & \multicolumn{1}{c}{PrMV}  & \multicolumn{1}{c}{PrDS}  & \multicolumn{1}{c}{MaMV}  & \multicolumn{1}{c}{MaDS}  & \multicolumn{1}{c}{MCL} & \multicolumn{1}{c}{MCNAL} & \multicolumn{1}{c}{w/o PA}
\\
\hline
4 & 3 & 0.374 (0.004) & 0.380 (0.004) & 0.365 (0.004) & 0.367 (0.004) & 0.331 (0.003) & 0.294 (0.003) & 0.316 (0.003) \\
4 & 5 & 0.405 (0.005) & 0.405 (0.004) & 0.394 (0.004) & 0.392 (0.004) & 0.353 (0.004) & 0.308 (0.003) & 0.349 (0.004) \\
4 & 7 & 0.432 (0.005) & 0.429 (0.005) & 0.409 (0.004) & 0.407 (0.004) & 0.369 (0.004) & 0.305 (0.003) & 0.372 (0.004) \\
12 & 3 & 0.443 (0.005) & 0.464 (0.005) & 0.425 (0.005) & 0.428 (0.004) & 0.426 (0.004) & 0.390 (0.004) & 0.403 (0.004) \\
12 & 5 & 0.494 (0.005) & 0.510 (0.005) & 0.457 (0.004) & 0.467 (0.004) & 0.466 (0.004) & 0.425 (0.004) & 0.442 (0.004) \\
12 & 7 & 0.540 (0.005) & 0.556 (0.005) & 0.498 (0.004) & 0.506 (0.004) & 0.496 (0.004) & 0.434 (0.004) & 0.490 (0.005)  \\
20 & 3 & 0.485 (0.005) & 0.516 (0.005) & 0.437 (0.004) & 0.454 (0.005) & 0.490 (0.005) & 0.451 (0.004) & 0.500 (0.005) \\
20 & 5 & 0.553 (0.005) & 0.579 (0.004) & 0.490 (0.004) & 0.512 (0.004) & 0.535 (0.004) & 0.495 (0.004) & 0.561 (0.005) \\
20 & 7 & 0.600 (0.004) & 0.616 (0.004) & 0.536 (0.004) & 0.553 (0.004) & 0.564 (0.004) & 0.508 (0.004) & 0.606 (0.004) \\
\hline
\end{tabular}
}
\caption{Miniimagenet}
\end{subtable}
\end{table*}

\begin{table*}[t!]
\caption{Average test accuracies and standard errors with different numbers of support data $N_{\cal S}$ on LabelMe.
The number of classes in each task is eight, and the number of support data per class is one, three, and five.
}
\label{result_label_new}
\centering
\begin{subtable}{1.0\linewidth}
\centering
\scalebox{0.8}{
\begin{tabular}{rrrrrrrr}
\hline
$N_{\cal S} $ & \multicolumn{1}{c}{Ours} & \multicolumn{1}{c}{LRMV} & \multicolumn{1}{c}{LRDS} & \multicolumn{1}{c}{RFMV} & \multicolumn{1}{c}{RFDS}  & \multicolumn{1}{c}{CL} & \multicolumn{1}{c}{CNAL}  \\
\hline
8 & 0.414 (0.015) & 0.202 (0.012) & 0.208 (0.010) & 0.165 (0.008) & 0.173 (0.008) & 0.247 (0.011) & 0.240 (0.011) \\
24 & 0.542 (0.011) & 0.261 (0.012) & 0.255 (0.010) & 0.243 (0.011) & 0.251 (0.014) & 0.359 (0.011) & 0.361 (0.011) \\
40 & 0.605 (0.010) & 0.278 (0.011) & 0.271 (0.010) & 0.280 (0.010) & 0.276 (0.010) & 0.422 (0.012) & 0.426 (0.013) \\
\hline
\end{tabular}
}
\end{subtable}
\quad
\begin{subtable}{1.0\linewidth}
\centering
\scalebox{0.8}{
\begin{tabular}{rrrrrrrr}
$N_{\cal S} $ & \multicolumn{1}{c}{PrMV}  & \multicolumn{1}{c}{PrDS}  & \multicolumn{1}{c}{MaMV}  & \multicolumn{1}{c}{MaDS}  & \multicolumn{1}{c}{MCL}  & \multicolumn{1}{c}{MCNAL} & \multicolumn{1}{c}{w/o PA} \\
\hline
8 & 0.381 (0.015) & 0.375 (0.016) & 0.297 (0.014) & 0.287 (0.015) & 0.329 (0.015) & 0.314 (0.014) & 0.276 (0.014) \\
24 & 0.514 (0.012) & 0.508 (0.014) & 0.404 (0.015) & 0.411 (0.013) & 0.509 (0.012) & 0.488 (0.013) & 0.412 (0.018) \\
40 & 0.576 (0.010) & 0.571 (0.011) & 0.460 (0.014) & 0.464 (0.015) & 0.592 (0.015) & 0.593 (0.012) & 0.515 (0.016) \\
\hline
\end{tabular}
}
\end{subtable}
\end{table*}

\begin{table*}[t!]
\caption{Average test accuracies and standard errors with different numbers of support data $N_{\cal S}$ on CIFAR-10H.
The number of classes in each task is ten, and the number of support data per class is one, three, and five.
}
\label{result_cifar_new}
\centering
\begin{subtable}{1.0\linewidth}
\centering
\scalebox{0.8}{
\begin{tabular}{rrrrrrrr}
\hline
$N_{\cal S} $ & \multicolumn{1}{c}{Ours} & \multicolumn{1}{c}{LRMV} & \multicolumn{1}{c}{LRDS} & \multicolumn{1}{c}{RFMV} & \multicolumn{1}{c}{RFDS}  & \multicolumn{1}{c}{CL} & \multicolumn{1}{c}{CNAL}  \\
\hline
10 & 0.147 (0.005) & 0.128 (0.004) & 0.129 (0.004) & 0.117 (0.003) & 0.117 (0.003) & 0.137 (0.004) & 0.137 (0.004) \\
30 & 0.184 (0.004) & 0.147 (0.004) & 0.144 (0.004) & 0.139 (0.004) & 0.137 (0.003) & 0.160 (0.004) & 0.160 (0.005) \\
50 & 0.223 (0.005) & 0.154 (0.004) & 0.150 (0.004) & 0.148 (0.003) & 0.142 (0.003) & 0.171 (0.004) & 0.173 (0.004) \\
\hline
\end{tabular}
}
\end{subtable}
\quad
\begin{subtable}{1.0\linewidth}
\centering
\scalebox{0.8}{
\begin{tabular}{rrrrrrrr}
$N_{\cal S} $ & \multicolumn{1}{c}{PrMV}  & \multicolumn{1}{c}{PrDS}  & \multicolumn{1}{c}{MaMV}  & \multicolumn{1}{c}{MaDS}  & \multicolumn{1}{c}{MCL}  & \multicolumn{1}{c}{MCNAL} & \multicolumn{1}{c}{w/o PA} \\
\hline
10 & 0.150 (0.004) & 0.148 (0.004) & 0.131 (0.004) & 0.129 (0.004) & 0.139 (0.004) & 0.146 (0.004) & 0.111 (0.003) \\
30 & 0.183 (0.005) & 0.182 (0.005) & 0.152 (0.004) & 0.153 (0.004) & 0.175 (0.005) & 0.176 (0.004) & 0.127 (0.004) \\
50 & 0.207 (0.005) & 0.204 (0.004) & 0.158 (0.004) & 0.159 (0.004) & 0.205 (0.004) & 0.207 (0.005) & 0.143 (0.004) \\
\hline
\end{tabular}
}
\end{subtable}
\end{table*}


\end{document}